\pdfoutput=1

\documentclass[11pt]{article}

\usepackage[final]{acl}

\usepackage{times}
\usepackage{latexsym}
\usepackage{microtype}
\usepackage{graphicx}
\usepackage{subfigure}
\usepackage{booktabs} 
\usepackage{latexsym}
\usepackage{tabularx} 
\usepackage{array}
\usepackage{multirow}
\usepackage{hyperref}
\usepackage{amsmath}
\usepackage{amssymb}
\usepackage{mathtools}
\usepackage{amsthm}
\usepackage{amsmath}
\usepackage{amsthm}
\usepackage{amssymb}

\usepackage[textsize=tiny]{todonotes}
\usepackage[capitalize,noabbrev]{cleveref}

\theoremstyle{plain}
\newtheorem{theorem}{Theorem}[section]

\newtheorem{lemma}[theorem]{Lemma}

\theoremstyle{definition}

\theoremstyle{remark}

\usepackage[T1]{fontenc}

\usepackage[utf8]{inputenc}

\usepackage{microtype}

\usepackage{inconsolata}

\usepackage{graphicx}

%
%

\title{ThoughtProbe: Classifier-Guided LLM Thought Space Exploration\\ via Probing Representations}


\author{Zijian Wang \\
  School of Computer Science \\
  The University of Sydney \\
  \texttt{zwan0998@uni.sydney.edu.au} \\\And
 Chang Xu\\
 School of Computer Science \\
 The University of Sydney \\
  \texttt{c.xu@sydney.edu.au} \\}

\begin{document}
\maketitle
\begin{abstract}
This paper introduces ThoughtProbe, a novel inference-time framework that leverages the hidden reasoning features of Large Language Models (LLMs) to improve their reasoning performance.
Unlike previous works that manipulate the hidden representations to steer LLM generation, we harness them as discriminative signals to guide the tree-structured response space exploration.
In each node expansion, a classifier serves as a scoring and ranking mechanism that efficiently allocates computational resources
by prioritizing higher score candidates for continuation.
After completing the tree expansion, we collect answers from all branches to form a candidate answer pool. 
We then propose a branch-aggregation method that marginalizes over all supporting branches by aggregating their CoT scores, thereby identifying the optimal answer from the pool.
Experimental results show that our framework's comprehensive exploration not only covers valid reasoning chains but also effectively identifies them, achieving significant improvements across multiple arithmetic reasoning benchmarks. The code is available at \url{https://github.com/Zijian007/Thoughtprobe}.
\end{abstract}

\section{Introduction}


Chain of Thought (CoT) reasoning has emerged as a pivotal approach for enhancing LLMs' problem-solving capabilities\cite{wei2022chain}.
However, eliciting this capability from pre-trained base LLMs typically requires expensive post-training or carefully designed prompting strategies\cite{yao2023tree, kojima2022large, hoffman2024training}.

Recent research demonstrates that LLMs' internal hidden representations serve as meaningful proxies for CoT behaviors, revealing a correspondence between reasoning patterns and specific linear features within the internal activation space\cite{ye2024physics}. 
This correspondence has given rise to two distinct insights for leveraging  representations to improve reasoning performance.
The first insight adopts a causality perspective, viewing hidden representations as causal factors that influence the CoT generation.
This has led to activation steering techniques that manipulate representations along specific directions to enhance reasoning capabilities\cite{hong2025reasoning, tang2025unlocking, hojerimproving}. 

Despite promising results, such approaches face inherent limitations. 
Direct manipulation risks disrupting the model's internal representational structure, potentially pushing activations out of distribution and degrading linguistic quality\cite{von2024language, da2025steering}. 
Moreover, the high-dimensional nature of the latent space makes it challenging for a single linear direction to capture the complexity of reasoning features, which often involve intricate patterns spanning multiple cognitive dimensions\cite{luo2024pace, bo2025steerable}.

\begin{figure*}[t]
  \begin{center}
  \centerline{\includegraphics[scale = 0.46]{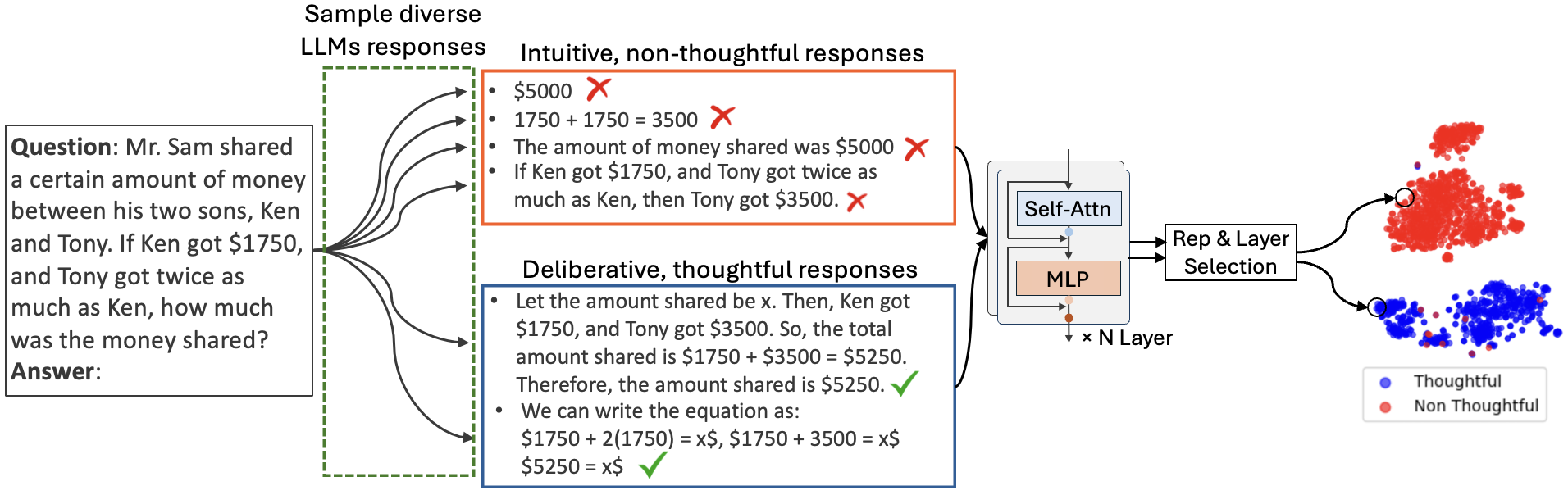}}
  \caption{Pre-trained LLMs could naturally generate both CoT and non-CoT responses when sampling multiple times, and hidden representations provide a strong signal for discriminating them.}
  \label{fig:intro}
  \end{center}
  \vspace{-2mm}
\end{figure*}

In this work, we adopt an alternative perspective, which recognizes the strong correlation between hidden representations and the manifestation of CoT in generated text.
Rather than manipulating representations to steer LLM generation, we leverage their discriminative capacity as indicators to detect reasoning patterns within the model's natural outputs.
Through rigorous empirical investigation, we first demonstrate that representations exhibit remarkable power in distinguishing between CoT and non-CoT content, particularly within specific representation types and network layers, as evidenced by a simple classifier's performance. 
Also, we show the classifier is a reliable evaluator that can assign higher scores to high quality CoT content, supported by both theoretical evidence and empirical validation.

Building on these findings, we present ThoughtProbe, a novel inference-time computational framework that effectively explores CoT paths via response space exploration.
Specifically, ThoughtProbe systematically explores the response space as an iterative tree expansion process, with the input question as the root node, and branches as the candidate CoT paths.
At each expansion step, multiple token sequences are generated in parallel as candidate child nodes, whose CoT score is evaluated by the classifier through probing their hidden representations. 
By prioritizing higher scoring candidates for continuation, we efficiently allocate computational resources and increase the likelihood of including correct reasoning paths in our exploration tree. 
This exploration process continues until either reaching the termination token or exhausting the computational budget.

Upon completion of tree expansion, we obtain multiple branches, each leading to a candidate answer, forming a comprehensive answer pool. 
To determine the optimal one, rather than using Best-of-N sampling\cite{huang2025best,sun2024fast}, we propose a branch-aggregation selection through value marginalization that considers CoT score across all branches leading to each candidate answer. 
Specifically, the value of each answer is computed by aggregating the CoT score of all its supporting branches, with the final answer selected as the one that achieves the highest marginal value.

Experiments on multiple reasoning benchmarks demonstrate that ThoughtProbe consistently outperforms existing inference-time computing methods, achieving significant improvements over both sampling-based methods (\textit{e.g.,} self-consistency) prompting-based techniques (\textit{e.g.,} zero-shot CoT and ToT) and activation-steering method. 
Our work provides new insights into enhancing LLMs' reasoning capabilities without requiring expensive fine-tuning or elaborate prompting strategies, and opens up promising directions for developing more robust reasoning systems that can effectively leverage the model's internal representations.

\section{Preliminary}
\subsection{LLMs Architecture and Hidden Representation}

To provide a foundation for the discussion, we first describe the basic structure of a Transformer-based LLM architecture \cite{vaswani2017attention}. 
The input text is initially tokenized into a sequence of tokens, which are then mapped to embeddings to form the initial representation sequence $ \mathbf{x}^{(0)} \in \mathbb{R}^{T \times d_\text{emb}}$.
Here, $T$ is the sequence length, and $d_\text{emb}$ is the embedding dimension.

The embeddings are then processed through multiple Transformer layers.
In each layer $l$, its hidden representations are composed of three components: activations from multi-head self-attention (MHA), multi-layer perceptron (MLP), and residual connections. This process can be formulated as:

\vspace{-5mm}
\begin{align*}
\mathbf{a}^{(l)}_\text{attn} &= \text{MHA}(\mathbf{h}^{(l)}) & \text{(Att activations)} \\
\mathbf{a}^{(l)}_\text{mlp} &= \text{MLP}(\mathbf{a}^{(l)}_\text{attn} + \mathbf{h}^{(l)})  & \text{(MLP activations)} \\
\mathbf{h}^{(l+1)} &= \mathbf{a}^{(l)}_\text{mlp} + \mathbf{a}^{(l)}_\text{attn} + \mathbf{h}^{(l)}  & \text{(Hidden states)}
\end{align*}

\subsection{LLMs Reasoning Structure}
Reasoning structures typically manifest in two fundamental topologies: sequential chains and branching trees. 
The chain structure reflects the step-by-step nature of logical deduction, while the tree structure captures the exploration of multiple potential reasoning paths. 
Below, we formally define these structures and their probabilistic formulations.

\textbf{Reasoning Chain:} For an input question $Q$, a reasoning chain is defined as a sequence of intermediate thought steps $R = [Q, r_1, r_2, ..., r_N]$, leading to a final answer $A$. 
Here, $r_i$ represents an intermediate thought at the $i$-th step, and $N$ denotes the chain length. 
The answer can be extracted by appending a trigger prompt at the end of the chain, like ``Therefore, the answer is".
The probability of generating such a chain can be formalized as:
\vspace{-2mm}
\begin{align*}
   P(R, A|Q) &= P(r_1|Q) \prod_{i=2}^N P(r_i|Q, r_{1:i-1}) \nonumber \\
   &\cdot P(A|Q, R)
\end{align*}
where $P(r_1|Q)$ is the probability of the first step, $P(r_i|Q, r_{1:i-1})$ is the probability of the $i$-th step, and $P(A|Q, R)$ is the probability of the final answer. 
At each step $i$, a new thought $r_i$ is appended to form $R = [Q, r_1, ..., r_{i-1}, r_i]$.

More specifically, each reasoning step $r_i$ is itself a token sequence, which can be further decomposed as:
\vspace{-2mm}
\begin{equation*}
P(r_i|Q, r_{1:i-1}) = \prod_{t=1}^{T_i} P(r_i^t|Q, r_{1:i-1}, r_i^{1:t-1})
\end{equation*}
where, $r_i^t$ denotes the $t$-th token in the $i$-th reasoning step, $T_i$ represents the total number of tokens in the $i$-th step, $r_i^{1:t-1}$ represents the previously generated tokens in the current step. 
In each token generation, the hidden representation $Rep(r_i^t)$ of token $r_i^t$ is accessible for probing.

\textbf{Branching Chains into Trees:} 
By sampling diverse tokens at each reasoning step, a single chain can branch into a tree structure, where $Q$ serves as the root node, each node represents an intermediate reasoning step.
This tree-based expansion explores multiple reasoning branches simultaneously and can increase the probability of covering the correct reasoning chain and answer.
At each step $r_i$, we could sample $k$ different continuations:
\vspace{-1mm}
\begin{equation*}
\{r_i^1, r_i^2, ..., r_i^k\} \sim P_k(r_i \mid Q, r_{1:i-1})
\end{equation*}

Here, $r_i^j$ represents the $j$-th sampled continuation at step $i$. 
Each root-to-leaf chain forms a distinct branch, leading to its answer, and collectively these branches generate an answer pool $\mathcal{A} = \{A_1, A_2, ..., A_p\}$.

While the tree structure improves solution coverage, it introduces two key challenges: 
(1) Candidate Selection: How to evaluate and prioritize promising children nodes in each exploration step? 
(2) Answer Determination: How to select the optimal answer from the pool $\mathcal{A}$?

\section{ThoughtProbe: Classifier-guided Reasoning Tree Exploration}
This section presents our ThoughtProbe framework that guide the response space exploration where the guidance signal is derived by probing representations.
We first validate the discriminative power of representations in discriminating CoT and non-CoT responses through comprehensive probing experiments across different LLMs.
We then introduce a classifier-guided beam search algorithm that systematically explores the response space to construct a diverse answer pool. 
Finally, we propose marginalization methods to aggregate these answers based on CoT score, enabling effective optimal answer selection.

\subsection{Probing Representations}
\label{main:Linear Probing in CoT}

\textbf{Setup}
We construct a binary representation classification dataset by first collecting paired CoT/non-CoT responses for questions sampled from GSM8K \cite{cobbe2021training} training set. 
For each question, we generate 10 distinct responses and classify them using GPT4o-as-Judge. 
We define CoT responses as those exhibiting correct step-by-step reasoning processes, while non-CoT responses provide answers directly without intermediate reasoning steps. 
Subsequently, we extract token-level representations from three widely-used LLMs: Mistral-7b \cite{jiang2023mistral}, Gemma-2-2b \cite{team2024gemma}, and Phi-1.5 \cite{li2023textbooks}, capturing activations across various layers and representation types.
More details are provided in the appendix\ref{app: classifier training data}.

\textbf{Classifier}
We employ Logistic Regression (LR) as our classifier. 
LR models the probability of CoT through a two-step process: first computing the logit (log-odds) using a linear function $\mathbf{w}^\top \mathbf{x} + b$, then transforming it to probability of positive via the sigmoid function $\sigma$.
\vspace{-2mm}
\begin{align*}
\text{logit} = \ln\frac{P(y=1|\mathbf{x})}{P(y=0|\mathbf{x})} &= \mathbf{w}^\top \mathbf{x} + b \\
P(y=1 | \mathbf{x}) = \sigma(\text{logit}) &= \frac{1}{1 + e^{-(\mathbf{w}^\top \mathbf{x} + b)}}
\end{align*}
where $\mathbf{w}$ is the weight vector, $b$ is the bias term, and $\mathbf{x}$ is the input feature vector. 

For each layer and representation type (Hidden states, Attention activations, and MLP activations), we train LR classifiers and evaluate their performance using AUC-ROC, and F1-score.

\begin{figure}[ht]
\begin{center}
\centerline{\includegraphics[scale = 0.35]{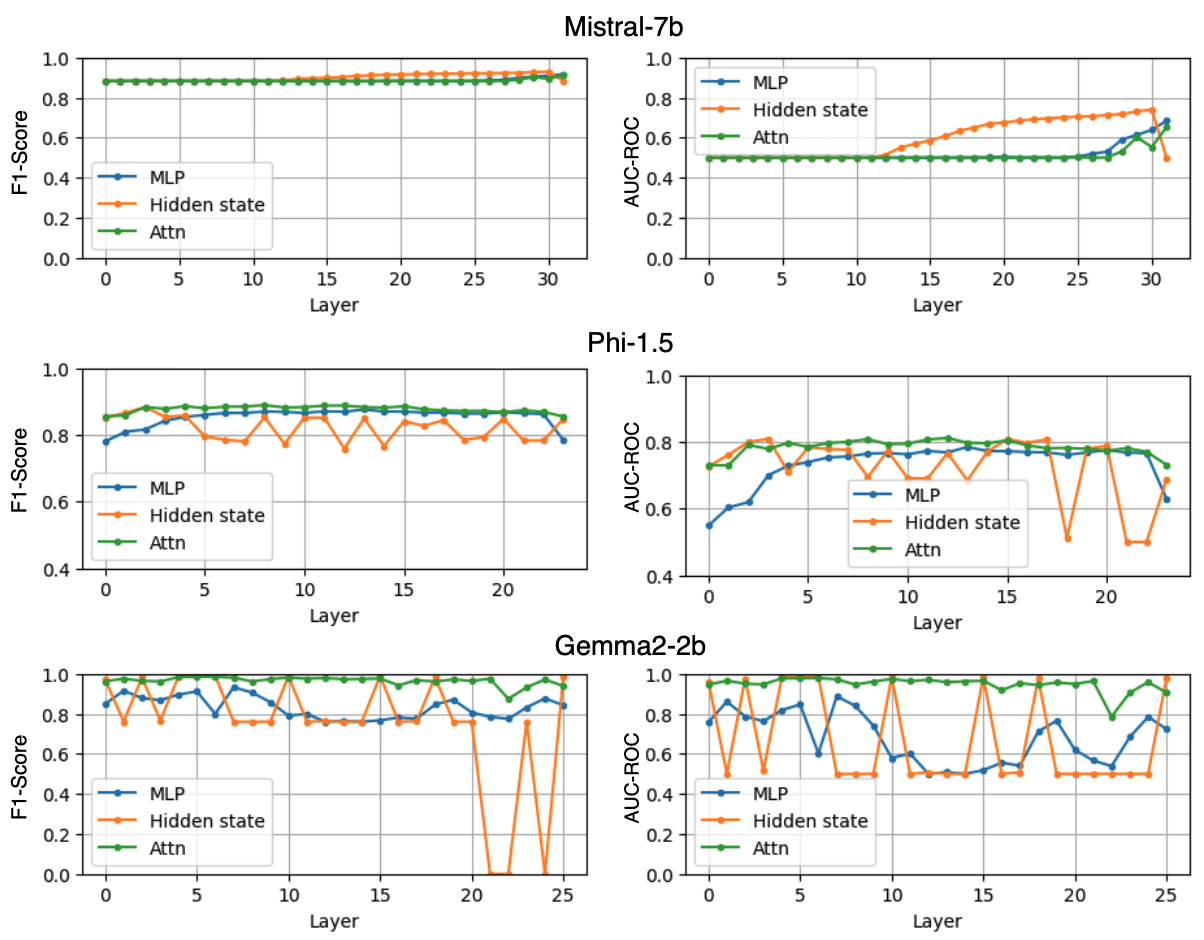}}
\caption{Layer-wise classification performance (F1-Score and AUC-ROC) across different representation types and LLMs.}
\label{classifier}
\end{center}
\vspace{-5mm}
\end{figure}

\textbf{Classification Results} Figure \ref{classifier} illustrates classification performance, that varies across representation types and layers in different LLMs.
(1) \textit{Representation type analysis:} In Mistral-7b, hidden states outperform both MLP and Attention activations. For Phi-1.5, Attention outputs demonstrate stable superiority despite hidden states' fluctuations. In Gemma2-2b, Attention outputs maintain consistent performance while hidden states and MLP activations fluctuate significantly. 
(2) \textit{Layer-wise analysis:} Layer depth influences performance differently across models. 
Mistral-7b shows a clear shallow-to-deep improvement trend, indicating progressive CoT feature refinement. 
Conversely, Phi-1.5 and Gemma2-2b exhibit fluctuating patterns with no consistent directional trends, suggesting more distributed CoT representations throughout layers.
Despite variations, we conclude that all LLMs achieve over 80\% performance with their optimal configurations, indicating the promising discriminative power of representations.

\textbf{Logit as Ranking Score}
Beyond the promising classification performance, we also validate that the classifier's logit can serve as a theoretically sound score for ranking and selecting higher CoT score candidates. 
Prior research\cite{sun2024rethinking} has demonstrated that a binary classifier's logit implies ordering equivalence with preference rewards in the Bradley-Terry model\cite{bradley1952rank}, establishing that:
\vspace{-1mm}
\begin{equation*}
l(x_1) > l(x_2) \implies r(x_1) > r(x_2)
\label{logit_order}
\end{equation*}
where $l(x)$ represents the logit value and $r(x)$ denotes the reward function in the Bradley-Terry model. 
A brief proof is provided in the appendix \ref{sec:proof}.
We empirically validate this ranking capability in Figure\ref{fig:score}. The left subplot shows CoT responses consistently achieve higher logit values than non-CoT responses, while the right subplot demonstrates correct CoT responses maintain higher logit values than incorrect ones. This suggests our classifier captures response quality regardless of reasoning correctness. Both theoretical and empirical evidence support using the classifier's logit as a ranking score for tree exploration.




\begin{figure}[h]
  \begin{center}
  \centerline{\includegraphics[scale = 0.30]{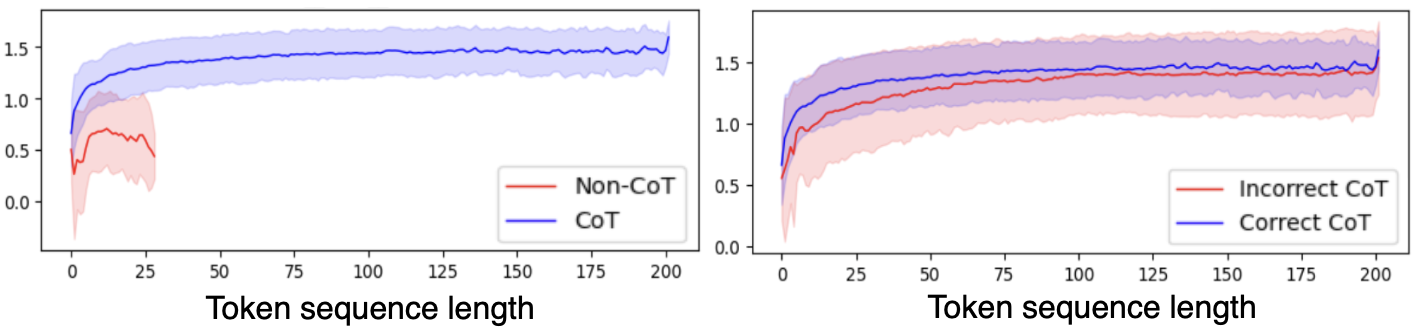}}
  \caption{Mean logit values and variance regions along the token sequence. 
  Left: Comparison between CoT and non-CoT responses. 
  Right: Comparison between correct and incorrect CoT responses.}
  \label{fig:score}
  \end{center}
  \vspace{-5mm}
  \end{figure}


\subsection{Classifier-guided Beam Search}
\begin{figure*}[ht]
    \begin{center}
    \centerline{\includegraphics[scale = 0.44]{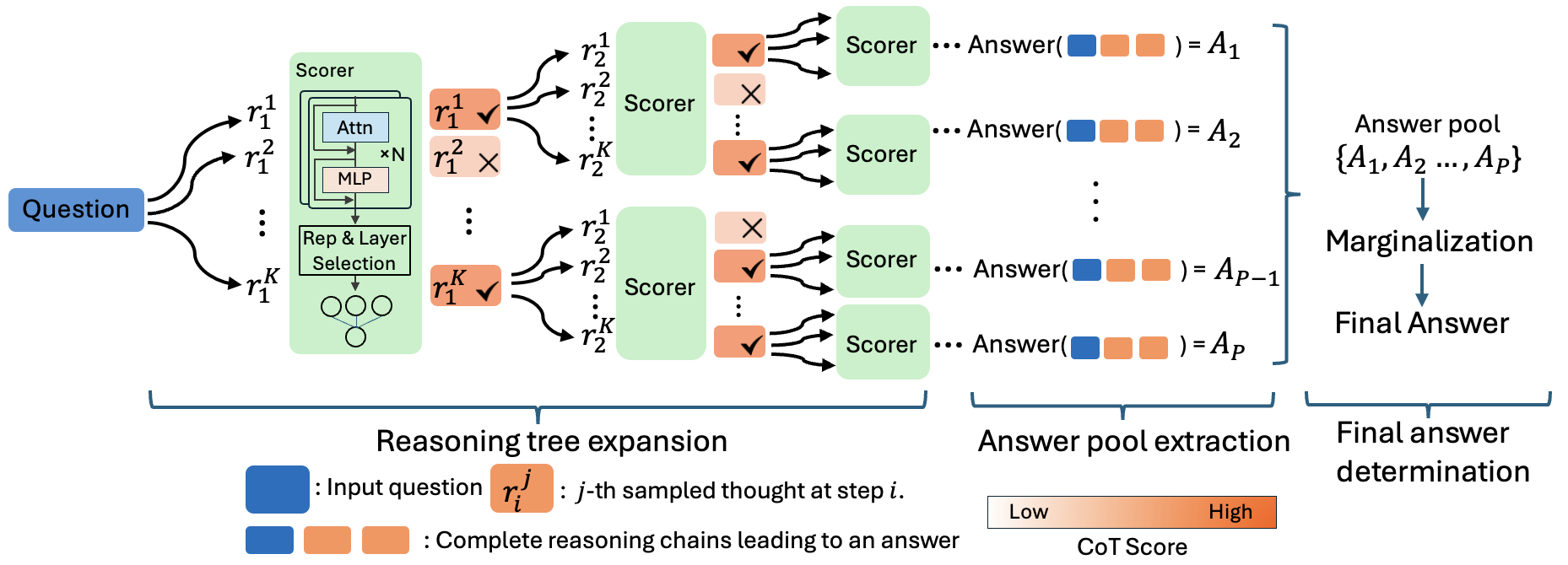}}
    \caption{Our classifier-guided tree exploration framework. 
    At each parent node, multiple candidates are sampled and evaluated by a pre-trained classifier by probing representations. 
    Nodes are selected for further expansion based on scores.
    Each exploration branch produces a candidate answer, forming an answer pool from which the final answer is determined through marginalization across all branches.}
    \vspace{-3mm}
    \label{framework}
    \end{center}
    \end{figure*}

With the classifier's logit as the ranking score, we propose a classifier-guided beam search for effective response space exploration.
Specifically, for a parent node (root question or intermediate reasoning step), the tree expansion process is formulated as follows:

\textbf{Diverse Beam Construction}: 
The process begins by generating diverse candidate continuations, organized into a beam. 
To encourage diversity, stochasticity must be introduced during token sequence generation.
In this paper, we employ Top-K-Start Greedy Decoding, which explores alternative top-$k$ tokens at the first decoding step, followed by greedy decoding for subsequent steps \cite{wang2024chain}.
The resulting $k$ reasoning chains, denoted as $B = \{R_1, R_2, ..., R_k\}$, represent potential continuations with associated hidden states, forming the initial beam for further processing.

\textbf{Derive CoT Score via Classifier}: 
Once the beam is constructed, a pre-trained classifier is used to evaluate the CoT score of each candidate . 
The classifier operates on the hidden state representations of the chains and assigns a score to each one. 
Specifically, for a candidate chain $R_i$, the CoT score $S_i$ is computed as $S_i = l(Rep(R_i[-1]))$, where $l(\cdot)$ is the logit output of classifier and $Rep(R_i[-1])$ represents the hidden state of the last token in $R_i$. 

\textbf{Beam Pruning by Score Ranking}: 
After scoring, all candidate chains are ranked based on their CoT scores, and only the top-$n$ highest-scoring candidates are retained for further expansion.
The pruned beam, denoted as $B'$, is defined as $B' = \{R_{\sigma(i)} \mid i \leq n\}$, where $\sigma$ is the permutation that sorts the scores in descending order, and $R_{\sigma(i)}$ represents the candidate corresponding to the $i$-th highest score. 
By dynamically adjusting the beam width $n$, we can control the trade-off between exploration breadth and computational efficiency. 
This pruning step ensures that only promising reasoning paths are preserved, effectively reducing computational overhead while maintaining the quality of the reasoning process. 

\textbf{Implementation Details}: 
Our framework consists of two phases: a branching phase for systematic exploration with depth $m$ and beam width $n$, followed by a completion phase for final generation.
During the branching phase, we iteratively expand the tree for $m$ steps. 
At each step $i$, we first generate $k$ candidate responses for each node and select the top-$n$ candidates based on their CoT scores, with each candidate expanded by generating a sequence of $T_i$ tokens.
In the completion phase, all leaf nodes from the branching phase are extended using greedy decoding until either reaching a completion token or the maximum length limit.
For input formatting, we adopt a simple question-answer template: ``Question:[question]\verb|\n|Answer:" without any additional prompting techniques.

\subsection{Answer Pool Marginalization}
After completing the tree expansion process, we generate final answers by appending the prompt ``Therefore, the answer is'' to each branch, resulting in an answer pool $\mathcal{A} = \{A_1, A_2, ..., A_p\}$. 
To select the final answer from the pool, several straightforward approaches can be applied: (1) majority voting based on answer frequency, and (2) single-branch selection that selects the answer from individual branch with the highest score metrics (\textit{e.g.}, final score or mean score).
Instead, we propose branch-aggregation selection that determines the final answer by aggregating branch score metrics for each answer.

Specifically, for each candidate answer $A_i$, we collect its supporting branches $R(A_i)$, which consists of all branches that arrive at $A_i$ as their final answer, formally defined as $R(A_i) = \{R \mid \text{answer}(R) = A_i\}$.
Then we compute the value of each branch from its node score sequence $[S_1, S_2, ..., S_N]$, using its final score $S_N$ as the branch value.
For each unique answer, we then aggregate the values of all its supporting branches by summation: $Value(A_i) = \sum_{R \in R(A_i)} Value(R)$.
Finally, we select the answer with the highest aggregated value as our final answer: $A^* = argmax_{A_i \in \mathcal{A}} Value(A_i)$.
We provide a detailed comparative analysis of different answer selection methods in Section \ref{sec: Answer Value Calculation Analysis}.

\section{Experiments}

\textbf{Dataset and LLMs} We evaluate our method on popular mathematical reasoning benchmarks: 
(1) GSM8K \cite{cobbe2021training}, a challenging dataset of grade school math problems; 
(2) MultiArith (MA) \cite{roy2016solving}; 
(3) SVAMP \cite{patel2021nlp}; 
(4) MAWPS \cite{koncel2016mawps};
and a logical reasoning benchmark:
(5) CoinFlips (CF) \cite{srivastava2022beyond}.
For our experiments, we use the same LLMs as in Section \ref{main:Linear Probing in CoT}: Mistral-7b, Gemma2-2b, and Phi-1.5.

\textbf{Baselines}
We compare our approach with six representative baselines:
(1) Greedy Decoding: Selects the highest-probability token at each step of generation.
(2) Zero-shot CoT prompting (Zs CoT) \cite{kojima2022large}: Appends "Let's think step by step" to questions, encouraging step-wise problem-solving without task-specific training.
(3) Zero-shot Tree of Thought prompting (Zs ToT) \cite{yao2023tree}: Generate multiple reasoning steps via prompting and evaluate through self-assessment prompts.
(4) Activation-Steering (Act-S)\cite{hojerimproving}: Steers model activations along a direction vector derived from the difference between CoT and non-CoT hidden states.
(5) Chain-of-Thought Decoding (CoT-Dec) \cite{wang2024chain}: Generates multiple solution paths and selects the most confident one based on the average probability margin between the top two token predictions in the answer segment.
(6) Self-consistency (SC) \cite{wang2022self}: Employs a majority voting mechanism across multiple generated responses to identify the most consistent answer.
In appendix\ref{app:computation complexity} and \ref{app:baselines reproducing details}, we provide a detailed analysis of the time complexity of each method and reproducing details.

\textbf{Hyperparameters Config}
For the branching phase, we set the depth $m$ = 3 and beam width $n$ = 3. At each step, we generate $k$ = 10 candidates and select top-$n$ based on CoT scores, with token generation lengths $T_i$ = [1, 20, 20] for steps $i$ = 1,2,3. For the completion phase, we extend each leaf node with two steps of greedy decoding, generating 100 tokens per step.
A detailed analysis of how depth $m$ and beam width $n$ affect the framework's performance is presented in Section \ref{sec: Search Space Analysis}.

\begin{table}[h]
\resizebox{0.5\textwidth}{!}{
\begin{tabular}{ccccccc}
\hline
LLM                                       & Methods & GSM8K       & MA       & SVAMP                & MAWPS           & CF          \\ \hline
\multirow{7}{*}{\rotatebox{90}{Mistral-7b}} & Greedy  & 11.92       & 15.16            & 52.66           & 58.29      & 47.60 \\
                                          & SC      & 17.13       & 27.22            & 58.00             & 66.56      & 51.60 \\
                                          & Zs CoT  & 26.17       & 50.47            & 56.33             & 69.81      & 53.00 \\
                                          & Zs ToT  & 33.82       & 52.65            & 59.75             & 71.69      & 54.40 \\
                                          & Act-S   & 15.48       & 18.93            & 56.48             & 59.45      & 48.00 \\
                                          & CoT-Dec & 25.79       & 39.76            & 58.66             & 64.78      & 51.20 \\
                                          & Ours    &\textbf{38.18}  &\textbf{58.57} &\textbf{61.33} &\textbf{80.64}  &\textbf{56.80}  \\ \hline
                            
\multirow{7}{*}{\rotatebox{90}{Gemma2-2b}}  & Greedy  & 6.42           & 5.53           & 38.53        & 46.16         & 44.40 \\
                                          & SC      & 7.59           & 8.41           & 40.00           & 47.00         & 49.80 \\
                                          & Zs CoT  & 16.92          & 42.11          & 39.33           & 51.69         & 48.40 \\
                                          & Zs ToT  & 18.73          & 45.74          & 44.08           & 55.37          & 53.20 \\
                                          & Act-S   & 7.38          & 11.43          & 41.36           & 49.84          & 45.00 \\
                                          & CoT-Dec & 14.34          & 33.22          & 38.99           & 50.28           & 47.40 \\
                                          & Ours    & \textbf{20.62} & \textbf{50.00} & \textbf{48.66}   & \textbf{63.86} & \textbf{54.60} \\ \hline

\multirow{7}{*}{\rotatebox{90}{Phi-1.5}}    & Greedy  & 5.69             & 24.44          & 24.33            & 33.74     & 42.60 \\
                                            & SC      & 25.02            & 33.88          & 29.03            & 39.16     & 46.20 \\
                                            & Zs CoT  & 7.21             & \textbf{83.88} & 39.33            & 65.18      & 54.40 \\
                                            & Zs ToT  & 29.56             & 53.45         & 41.85           & 67.18     & 55.60 \\
                                            & Act-S   & 6.65             & 25.65          & 28.66            & 37.84      & 44.20 \\
                                            & CoT-Dec & 23.12            & 25.00          & 23.66            & 50.05         & 49.40 \\
                                            & Ours    & \textbf{37.38}   & 80.56          & \textbf{45.66}   &\textbf{68.45} & \textbf{56.80} \\ \hline
\end{tabular}
}
\vskip -0.1in
\caption{Problem solving accuracy compared with baselines across LLMs and datasets}
\label{tab: main results}
\end{table}

\begin{table}[]
  \label{answer selection}
  \resizebox{0.5\textwidth}{!}{
  \begin{tabular}{cccccc}
  
  \hline
  LLMs & Methods & GSM8K & MultiArith & SVAMP &WAMPS\\ \hline
  \multirow{5}{*}{\rotatebox{90}{Mistral-7b}} 
  & Cover Rate  & 85.44          & 91.65          & 90.33          & 94.33 \\
  & F Agg/BoN    & 38.18/27.84     & \textbf{58.57}/32.78  & \textbf{61.33}/52.45  & \textbf{80.64}/63.18  \\
  & M Agg/BoN     & 38.21/24.92        & 55.15/33.42        & 58.44/51.52       & 77.33/61.42\\
  & IR Agg/BoN   & \textbf{42.92}/ 23.52 & 57.53/35.63       & 60.21/47.42         & 79.33/64.21\\
  & Vote & 39.21 & 56.15 & 59.44 & 78.33 \\ \hline
  \multirow{5}{*}{\rotatebox{90}{Gemma2-2b}}
  & Cover Rate   & 79.65                & 84.33                 & 88.44                    & 90.74 \\
  & F Agg/BoN    & \textbf{20.62}/11.52 & 50.00/25.53         & \textbf{48.66}/16.82   & \textbf{63.86}/35.42 \\
  & M Agg/BoN    & 18.15/10.83         & 47.77/27.63         & 45.33/23.63             & 61.33/43.85\\
  & IR Agg/BoN   & 21.53/13.53        & \textbf{51.21}/34.42 & 47.44/19.42            & 62.33/40.91\\
  & Vote          & 19.21               & 48.15                & 46.44                     & 62.33            \\ \hline
  
  \multirow{5}{*}{\rotatebox{90}{Phi-1.5}}
  & Cover Rate      & 84.33                 & 89.42                     & 88.63                 & 92.82 \\
  & F Agg/BoN       & 37.38/21.72         & \textbf{80.56}/56.86    &\textbf{45.66}/29.74    & 68.45/49.72 \\
  & M Agg/BoN       & 35.77/20.44          & 77.21/49.63            & 42.33/31.42         & 65.53/50.82\\
  & IR Agg/BoN      & \textbf{38.21}/21.93 & 79.53/48.82                     & 44.65/30.84         & \textbf{69.84}/51.72\\
  & Vote            & 36.21               & 78.15                       &  43.44                & 66.49 \\ \hline
  \end{tabular}
  }
  \vskip -0.1in
  \caption{Performance comparison of different answer selection methods. F Agg/BoN, M Agg/BoN, and IR Agg/BoN represent the accuracy of branch-aggregation/best-of-N selection using final scores, mean scores, and increase ratio respectively. Vote shows the accuracy of majority voting baseline.}
  \vspace{-5mm}
  \label{tab: answer value}
  \end{table}

\subsection{Main Experimental Analysis}
As shown in Table \ref{tab: main results}, our method consistently outperforms baseline approaches in most scenarios, achieving substantial improvements in problem solving accuracy.

\begin{table*}[t]
  \centering
  \resizebox{0.92\textwidth}{!}{
  \begin{tabular}{cccccccc}
  \hline
  \multirow{2}{*}{LLMs}   & \multirow{2}{*}{Dataset}  & \multicolumn{3}{c}{LR} 
                          & \multicolumn{3}{c}{SVM} \\ \cline{3-8}   
                          &   & MLP  & Attn       & \multicolumn{1}{c|}{Hidden states}   & MLP & Attn  & Hidden states \\ \hline
  \multirow{4}{*}{Mistral-7b}    & GSM8K         & 35.21/18.42  & 34.57/17.23           & \textbf{38.18}/13.75  & 36.43/17.42 & 33.37/6.23 & 38.32/12.39\\
                                 & MultiArith    & 51.55/16.33 & 49.82/21.65          & \textbf{58.57}/23.39  & 49.55/18.33 & 50.59/18.65 & 57.45/20.11   \\
                                 & SVAMP         & 35.43/28.72 & 34.65/23.23          & \textbf{61.33}/25.76   & 36.43/27.42 & 35.43/26.42 & 60.39/17.23    \\
                                 & WAMPS         & 69.55/39.33 & 72.81/42.65          & \textbf{80.64}/52.46  & 68.55/38.92 & 77.52/7.33 & 79.38/18.65    \\ \hline
  \multirow{4}{*}{Gemma2-2b}     & GSM8K         & 15.41/6.51  & \textbf{20.62}/15.86 & 17.41/12.39     & 18.41/5.62 & 18.41/4.81 & 19.91/9.71   \\
                                 & MultiArith    & 42.41/17.74  & \textbf{50.00}/21.23 & 40.92/29.72    & 46.41/6.23 & 48.41/10.63 & 41.65/15.85   \\
                                 & SVAMP         & 35.43/13.92 & \textbf{48.66}/27.23  & 45.33/30.76   & 36.43/16.42 & 47.43/5.42 & 44.81/13.84    \\
                                 & WAMPS         & 48.55/28.74 & 63.86/29.65  & 47.64/23.59            & 49.55/27.33 & \textbf{64.55}/26.37 & 46.84/16.65    \\ \hline
  \multirow{4}{*}{Phi-1.5}       & GSM8K         & 15.41/8.69  & \textbf{37.38}/11.82 & 14.14/9.28    & 16.41/4.84 & 36.41/23.48 & 13.83/7.93    \\
  \multicolumn{1}{l}{}           & MultiArith    & 46.41/24.82 & \textbf{80.56}/21.23 & 45.41/24.61    & 47.41/15.71 & 75.41/34.72 & 44.28/19.47   \\
  \multicolumn{1}{l}{}           & SVAMP         & 34.43/16.71 & 45.66/27.75  & 43.39/25.76            & 35.43/21.42 & \textbf{46.43}/24.24 & 43.14/24.85    \\
  \multicolumn{1}{l}{}           & WAMPS         & 47.55/27.48 & \textbf{68.45}/49.65  & 46.64/32.53   & 48.55/26.33 & 67.55/45.62 & 45.72/29.38     \\ \hline
  \end{tabular}
  }
  \vskip -0.1in
  \caption{Performance comparison of different classifiers (LR and SVM) and representations (MLP, Attention, Hidden states) using accuracy scores on top-3 and bottom-3 layers (reported as top-3/bottom-3) on math reasoning datasets.}
  \label{tab: classifier feature analysis}
  \end{table*}

\textbf{Cross-Model Analysis}
Our method shows robust performance gains across different LLM scales. 
For the larger Mistral-7b model, we observe the most significant improvements, with our method achieving 38.18\% accuracy on GSM8K, surpassing the strongest baseline (Zs CoT) by 14.01\%.
The performance advantage maintains for smaller models like Gemma2-2b and Phi-1.5, where our method improves GSM8K accuracy by 3.7\% and 12.36\% respectively compared to their best baselines. 
This demonstrates our method's effectiveness and generalizability across different model scales.

\textbf{Cross-Dataset Analysis}
Our method shows varying effectiveness across different datasets.
On GSM8K's complex multi-step problems, we demonstrate consistent superiority across all models.
For MultiArith, while achieving strong performance with Mistral-7b (58.57\%) and Gemma-2-2b (50.00\%), Phi-1.5 shows slightly lower accuracy (80.56\%) compared to Zs CoT (83.88\%), suggesting simpler arithmetic problems might benefit less from our approach.
On SVAMP and MAWPS, we maintain consistent improvements, with notable gains on MAWPS (3.27\%-12.17\% over the best baseline).
On CoinFlips, we achieve 56.80\% accuracy, which is higher than the best baseline (Zs CoT) by 12.20\% in phi-1.5.
Notably, we train our classifier only on GSM8K training set and use this single classifier across all datasets, demonstrating strong generalization to various mathematical reasoning datasets.

\subsection{Answer Selection Analysis}
\label{sec: Answer Value Calculation Analysis}

Table \ref{tab: answer value} presents a comprehensive comparison of different approaches for final answer selection from the answer pool.
We first examine the coverage rate - the percentage of correct answers present in the pool - which indicates an upper bound for selection accuracy.
The high coverage rates (79\%-94\%) demonstrate that our exploration strategy effectively traverses the response space and captures valid reasoning chains.

We then evaluate two main selection paradigms: Best-of-N(BoN) selection and branch-aggregation selection, across three score sequence metrics: final scores, average scores, and increase ratio (defined as the proportion of score improvements between adjacent nodes).
Our analysis shows that branch-aggregation selection consistently outperforms BoN selection across all metrics. 
Among the three metrics, final scores yield the best performance, followed by increase ratio, while average scores show relatively inferior results.
Additionally, we benchmark these methods against the baseline majority voting approach, which shows superior performance to single-branch selection but falls short of branch-aggregation selection.

\subsection{Classifier Feature Analysis}
Table \ref{tab: classifier feature analysis} shows the performance comparison of different classifiers features, including classifier type, representation type and layers range.

\textbf{Classifier type Study}
We comparing Support Vector Machine(SVM) and LR classifiers, we observe their comparable performance across different representations, layers, and LLMs.
While SVM shows slightly better results in some cases, the differences are marginal, suggesting both classifiers can effectively guide the search process.

\textbf{Representation Layer Analysis}
We analyze the impact of layer by comparing top-3 and bottom-3 layers based on their classification F1-scores.
The results show that across all LLMs, using top-performing layers consistently outperforms bottom layers.
For GSM8K, the average improvements are 31.36\%, 29.79\%, and 30.04\% on Mistral-7b, Gemma-2-2b, and Phi-1.5 respectively, demonstrating that layer selection significantly affects search effectiveness.

\textbf{Representation Type Study}
Hidden states yield the best search performance for Mistral-7b, while attention activations prove more effective for both Gemma-2-2b and Phi-1.5. 
This pattern mirrors the relative strengths we observed in classification performance, suggesting a consistent relationship between classifier logit and reward.

\subsection{Tree Search Space Scaling Laws}

\label{sec: Search Space Analysis}
We investigate how different search space size configurations affect model performance by varying beam width $n$ and tree depth $m$.
For each configuration, we maintain the initial sampling size $k=10$ while adjusting width $n \in \{1,2,3,4,5,6\}$ and depth $m \in \{1, 2, 3, 4, 5, 6\}$. 
All generated chains are constrained to a maximum length of 240 tokens, with tokens evenly distributed across depth steps ($T_i = 240/m$ tokens per step).
Figure \ref{fig: Search Space} demonstrates how performance varies with different combinations of width and depth, using Phi-1.5 on GSM8K.

\textbf{Beam Width Impact}
The accuracy improves substantially as the beam width increases, demonstrating the benefits of maintaining more parallel branches at each expansion step. 
The improvement trend begins to plateau around width = 4, suggesting that maintaining 3-4 parallel reasoning trajectories provides sufficient exploration while remaining computationally efficient. 
Further increasing the beam width yields diminishing returns, possibly due to the introduction of more noise than CoT content.

\textbf{Search Depth Study}
The accuracy improves as search depth increases and reaches its peak at depth 3 or 4.
Beyond this optimal depth, performance gradually declines, suggesting that deeper searches may accumulate errors and explore irrelevant reasoning paths. 
This optimal depth aligns with general problem solving patterns, as most tree search methods can solve reasoning problems within 3-4 key reasoning steps \cite{he2024advancing, wang2024seed}.

\begin{figure}[ht]
  \begin{center}
  \centerline{\includegraphics[scale = 0.31]{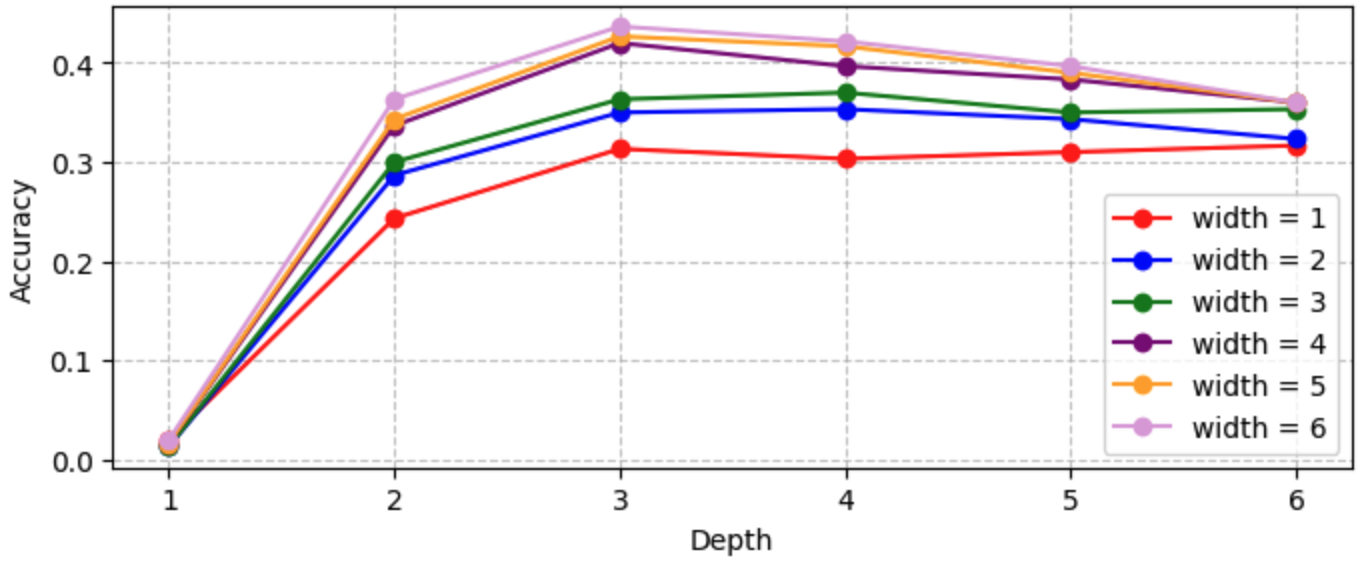}}
  \vskip -0.1in
  \caption{The accuracy plot when scaling the search space with different expansion depth and beam width.}
  \label{fig: Search Space}
  \end{center}
  \vskip -0.2in
  \end{figure}

\section{Related Work}

\subsection{Reasoning Ability Enhancement in LLMs}
Methods to improve LLMs' reasoning ability can be categorized into tuning-based and inference-time approaches.
Tuning-based methods focus on fine-tuning LLMs with high-quality rationales. STaR \cite{zelikman2022star} iteratively bootstraps rationales through generation and filtering. TRICE \cite{hoffman2024training} employs MCMC sampling to construct training data with rationales and leverages rationalization for failed cases. 
DeepseekR1 \cite{guo2025deepseek} uses outcome reward to reinforces the CoT ability.
Inference-based methods design structured reasoning frameworks to guide LLMs during inference. Chain-of-thought (CoT) \cite{wei2022chain, kojima2022large} breaks down reasoning into sequential steps. Tree-of-thoughts (ToT) \cite{yao2024tree, long2023large} enables multi-path exploration with backtracking. Graph-of-Thoughts (GoT) \cite{besta2024graph} extends to arbitrary graph topologies for complex reasoning patterns.
Tree-based methods have emerged as mainstream by balancing exploration capability with structural simplicity.

\subsection{Linear Representation Hypothesis in LLMs}
The Linear Representation Hypothesis (LRH), initially proposed in word embeddings \cite{mikolov2013linguistic}, suggests that semantic features exist as linear directions in activation space. 
Recent work has extended this to LLMs \cite{luo2024pace, von2024language, zou2310representation, park2311linear}, showing that high-level concepts like truthfulness \cite{li2024inference, burns2022discovering}, morality \cite{zou2310representation}, and factual knowledge \cite{gurnee2023language} can be represented linearly in model's activation space.
This finding enables two key applications: detection and guidance. For detection, linear classifiers can effectively probe specific concepts \cite{chen2024inside, du2024haloscope}, with their high performance indicating the linear encoding of these concepts. For guidance, these identified directions can be leveraged to steer model behavior during inference \cite{lee2409programming, li2024inference, zhao2024steering}.

\section{Conclusion}
In this work, we present ThoughtProbe, a pure inference-time framework that leverages LLMs' hidden reasoning features to improve reasoning performance. 
Our probing experiments reveal that LLM architectures encode CoT differently across representation types and layers, with simple linear classifiers achieving strong performance.
We also show theoretically and empirically that these classifiers effectively score and rank candidates to guide the search process.
Building on this discovery, we develop a classifier-guided beam search algorithm that effectively explores the reasoning space by prioritizing promising candidates.
Our framework combines tree-structured exploration with branch aggregation for final answer determination, enabling systematic utilization of valid reasoning chains within response space.
Extensive experiments across multiple benchmarks demonstrate the effectiveness of our approach, achieving significant improvements over existing methods. 

\section*{Limitations}
Despite our approach's effectiveness, we acknowledge several limitations that may warrant further investigation. 
First, our current implementation relies on fixed token lengths to segment intermediate thoughts during tree node expansion, which may disrupt natural reasoning by forcing arbitrary branching points. 
Future work should explore more flexible, semantic-aware splitting criteria to better preserve complete units of reasoning.
Second, while the answer pool achieves promising coverage rates for correct answers, our final answer selection process has room for improvement. 
The observable gap between coverage and accuracy suggests current chain evaluation and branch-aggregation strategies may not optimally capture answer quality. 
Future research could develop more sophisticated scoring metrics and aggregation methods.
Another limitation is increased inference-time cost: ThoughtProbe's tree expansion and classifier evaluation require more computation and higher latency than standard single-path or sampling methods, especially with greater search depth and beam width. This may limit practicality in latency-sensitive settings. Future work could reduce overhead via more efficient pruning, early stopping, or lightweight scoring.


\bibliography{custom}

\begin{thebibliography}{45}
\providecommand{\natexlab}[1]{#1}

\bibitem[{Besta et~al.(2024)Besta, Blach, Kubicek, Gerstenberger, Podstawski, Gianinazzi, Gajda, Lehmann, Niewiadomski, Nyczyk et~al.}]{besta2024graph}
Maciej Besta, Nils Blach, Ales Kubicek, Robert Gerstenberger, Michal Podstawski, Lukas Gianinazzi, Joanna Gajda, Tomasz Lehmann, Hubert Niewiadomski, Piotr Nyczyk, and 1 others. 2024.
\newblock Graph of thoughts: Solving elaborate problems with large language models.
\newblock In \emph{Proceedings of the AAAI Conference on Artificial Intelligence}, volume~38, pages 17682--17690.

\bibitem[{Bo et~al.(2025)Bo, Xu, Chatterjee, Passarella-Ward, Kulshrestha, and Shin}]{bo2025steerable}
Jessica~Y Bo, Tianyu Xu, Ishan Chatterjee, Katrina Passarella-Ward, Achin Kulshrestha, and D~Shin. 2025.
\newblock Steerable chatbots: Personalizing llms with preference-based activation steering.
\newblock \emph{arXiv preprint arXiv:2505.04260}.

\bibitem[{Bradley and Terry(1952)}]{bradley1952rank}
Ralph~Allan Bradley and Milton~E Terry. 1952.
\newblock Rank analysis of incomplete block designs: I. the method of paired comparisons.
\newblock \emph{Biometrika}, 39(3/4):324--345.

\bibitem[{Burns et~al.(2022)Burns, Ye, Klein, and Steinhardt}]{burns2022discovering}
Collin Burns, Haotian Ye, Dan Klein, and Jacob Steinhardt. 2022.
\newblock Discovering latent knowledge in language models without supervision.
\newblock \emph{arXiv preprint arXiv:2212.03827}.

\bibitem[{Chen et~al.(2024)Chen, Liu, Chen, Gu, Wu, Tao, Fu, and Ye}]{chen2024inside}
Chao Chen, Kai Liu, Ze~Chen, Yi~Gu, Yue Wu, Mingyuan Tao, Zhihang Fu, and Jieping Ye. 2024.
\newblock Inside: Llms' internal states retain the power of hallucination detection.
\newblock \emph{arXiv preprint arXiv:2402.03744}.

\bibitem[{Cobbe et~al.(2021)Cobbe, Kosaraju, Bavarian, Chen, Jun, Kaiser, Plappert, Tworek, Hilton, Nakano et~al.}]{cobbe2021training}
Karl Cobbe, Vineet Kosaraju, Mohammad Bavarian, Mark Chen, Heewoo Jun, Lukasz Kaiser, Matthias Plappert, Jerry Tworek, Jacob Hilton, Reiichiro Nakano, and 1 others. 2021.
\newblock Training verifiers to solve math word problems, 2021.
\newblock \emph{URL https://arxiv. org/abs/2110.14168}.

\bibitem[{Da~Silva et~al.(2025)Da~Silva, Sethuraman, Rajagopal, Hajishirzi, and Kumar}]{da2025steering}
Patrick~Queiroz Da~Silva, Hari Sethuraman, Dheeraj Rajagopal, Hannaneh Hajishirzi, and Sachin Kumar. 2025.
\newblock Steering off course: Reliability challenges in steering language models.
\newblock \emph{arXiv preprint arXiv:2504.04635}.

\bibitem[{Du et~al.(2024)Du, Xiao, and Li}]{du2024haloscope}
Xuefeng Du, Chaowei Xiao, and Yixuan Li. 2024.
\newblock Haloscope: Harnessing unlabeled llm generations for hallucination detection.
\newblock \emph{arXiv preprint arXiv:2409.17504}.

\bibitem[{Guo et~al.(2025)Guo, Yang, Zhang, Song, Zhang, Xu, Zhu, Ma, Wang, Bi et~al.}]{guo2025deepseek}
Daya Guo, Dejian Yang, Haowei Zhang, Junxiao Song, Ruoyu Zhang, Runxin Xu, Qihao Zhu, Shirong Ma, Peiyi Wang, Xiao Bi, and 1 others. 2025.
\newblock Deepseek-r1: Incentivizing reasoning capability in llms via reinforcement learning.
\newblock \emph{arXiv preprint arXiv:2501.12948}.

\bibitem[{Gurnee and Tegmark(2023)}]{gurnee2023language}
Wes Gurnee and Max Tegmark. 2023.
\newblock Language models represent space and time.
\newblock \emph{arXiv preprint arXiv:2310.02207}.

\bibitem[{He et~al.(2024)He, Shen, Zhang, Tan, and Lu}]{he2024advancing}
Mingqian He, Yongliang Shen, Wenqi Zhang, Zeqi Tan, and Weiming Lu. 2024.
\newblock Advancing process verification for large language models via tree-based preference learning.
\newblock \emph{arXiv preprint arXiv:2407.00390}.

\bibitem[{Hoffman et~al.(2024)Hoffman, Phan, Dohan, Douglas, Le, Parisi, Sountsov, Sutton, Vikram, and A~Saurous}]{hoffman2024training}
Matthew~Douglas Hoffman, Du~Phan, David Dohan, Sholto Douglas, Tuan~Anh Le, Aaron Parisi, Pavel Sountsov, Charles Sutton, Sharad Vikram, and Rif A~Saurous. 2024.
\newblock Training chain-of-thought via latent-variable inference.
\newblock \emph{Advances in Neural Information Processing Systems}, 36.

\bibitem[{H{\o}jer et~al.(2025)H{\o}jer, Jarvis, and Heinrich}]{hojer2025improving}
Bertram H{\o}jer, Oliver Jarvis, and Stefan Heinrich. 2025.
\newblock Improving reasoning performance in large language models via representation engineering.
\newblock \emph{arXiv preprint arXiv:2504.19483}.

\bibitem[{H{\o}jer et~al.()H{\o}jer, Jarvis, and Heinrich}]{hojerimproving}
Bertram H{\o}jer, Oliver~Simon Jarvis, and Stefan Heinrich.
\newblock Improving reasoning performance in large language models via representation engineering.
\newblock In \emph{The Thirteenth International Conference on Learning Representations}.

\bibitem[{Hong et~al.(2025)Hong, Zhou, Cao, Yu, and Jin}]{hong2025reasoning}
Yihuai Hong, Dian Zhou, Meng Cao, Lei Yu, and Zhijing Jin. 2025.
\newblock The reasoning-memorization interplay in language models is mediated by a single direction.
\newblock \emph{arXiv preprint arXiv:2503.23084}.

\bibitem[{Huang et~al.(2025)Huang, Block, Liu, Jiang, Krishnamurthy, and Foster}]{huang2025best}
Audrey Huang, Adam Block, Qinghua Liu, Nan Jiang, Akshay Krishnamurthy, and Dylan~J Foster. 2025.
\newblock Is best-of-n the best of them? coverage, scaling, and optimality in inference-time alignment.
\newblock \emph{arXiv preprint arXiv:2503.21878}.

\bibitem[{Jiang et~al.(2023)Jiang, Sablayrolles, Mensch, Bamford, Chaplot, Casas, Bressand, Lengyel, Lample, Saulnier et~al.}]{jiang2023mistral}
Albert~Q Jiang, Alexandre Sablayrolles, Arthur Mensch, Chris Bamford, Devendra~Singh Chaplot, Diego de~las Casas, Florian Bressand, Gianna Lengyel, Guillaume Lample, Lucile Saulnier, and 1 others. 2023.
\newblock Mistral 7b.
\newblock \emph{arXiv preprint arXiv:2310.06825}.

\bibitem[{Kojima et~al.(2022)Kojima, Gu, Reid, Matsuo, and Iwasawa}]{kojima2022large}
Takeshi Kojima, Shixiang~Shane Gu, Machel Reid, Yutaka Matsuo, and Yusuke Iwasawa. 2022.
\newblock Large language models are zero-shot reasoners.
\newblock \emph{Advances in neural information processing systems}, 35:22199--22213.

\bibitem[{Koncel-Kedziorski et~al.(2016)Koncel-Kedziorski, Roy, Amini, Kushman, and Hajishirzi}]{koncel2016mawps}
Rik Koncel-Kedziorski, Subhro Roy, Aida Amini, Nate Kushman, and Hannaneh Hajishirzi. 2016.
\newblock Mawps: A math word problem repository.
\newblock In \emph{Proceedings of the 2016 conference of the north american chapter of the association for computational linguistics: human language technologies}, pages 1152--1157.

\bibitem[{Lee et~al.()Lee, Padhi, Ramamurthy, Miehling, Dognin, Nagireddy, and Dhurandhar}]{lee2409programming}
Bruce~W Lee, Inkit Padhi, Karthikeyan~Natesan Ramamurthy, Erik Miehling, Pierre Dognin, Manish Nagireddy, and Amit Dhurandhar.
\newblock Programming refusal with conditional activation steering, 2024.
\newblock \emph{URL https://arxiv. org/abs/2409.05907}.

\bibitem[{Li et~al.(2024)Li, Patel, Vi{\'e}gas, Pfister, and Wattenberg}]{li2024inference}
Kenneth Li, Oam Patel, Fernanda Vi{\'e}gas, Hanspeter Pfister, and Martin Wattenberg. 2024.
\newblock Inference-time intervention: Eliciting truthful answers from a language model.
\newblock \emph{Advances in Neural Information Processing Systems}, 36.

\bibitem[{Li et~al.(2023)Li, Bubeck, Eldan, Del~Giorno, Gunasekar, and Lee}]{li2023textbooks}
Yuanzhi Li, S{\'e}bastien Bubeck, Ronen Eldan, Allie Del~Giorno, Suriya Gunasekar, and Yin~Tat Lee. 2023.
\newblock Textbooks are all you need ii: phi-1.5 technical report.
\newblock \emph{arXiv preprint arXiv:2309.05463}.

\bibitem[{Long(2023)}]{long2023large}
Jieyi Long. 2023.
\newblock Large language model guided tree-of-thought.
\newblock \emph{arXiv preprint arXiv:2305.08291}.

\bibitem[{Luo et~al.(2024)Luo, Ding, Chan, Thaker, Chattopadhyay, Callison-Burch, and Vidal}]{luo2024pace}
Jinqi Luo, Tianjiao Ding, Kwan Ho~Ryan Chan, Darshan Thaker, Aditya Chattopadhyay, Chris Callison-Burch, and Ren{\'e} Vidal. 2024.
\newblock Pace: Parsimonious concept engineering for large language models.
\newblock \emph{arXiv preprint arXiv:2406.04331}.

\bibitem[{Mikolov et~al.(2013)Mikolov, Yih, and Zweig}]{mikolov2013linguistic}
Tom{\'a}{\v{s}} Mikolov, Wen-tau Yih, and Geoffrey Zweig. 2013.
\newblock Linguistic regularities in continuous space word representations.
\newblock In \emph{Proceedings of the 2013 conference of the north american chapter of the association for computational linguistics: Human language technologies}, pages 746--751.

\bibitem[{Park et~al.()Park, Choe, and Veitch}]{park2311linear}
Kiho Park, Yo~Joong Choe, and Victor Veitch.
\newblock The linear representation hypothesis and the geometry of large language models, 2023.
\newblock \emph{URL: https://arxiv. org/abs/2311.03658}.

\bibitem[{Patel et~al.(2021)Patel, Bhattamishra, and Goyal}]{patel2021nlp}
Arkil Patel, Satwik Bhattamishra, and Navin Goyal. 2021.
\newblock Are nlp models really able to solve simple math word problems?
\newblock \emph{arXiv preprint arXiv:2103.07191}.

\bibitem[{Roy and Roth(2016)}]{roy2016solving}
Subhro Roy and Dan Roth. 2016.
\newblock Solving general arithmetic word problems.
\newblock \emph{arXiv preprint arXiv:1608.01413}.

\bibitem[{Srivastava et~al.(2022)Srivastava, Rastogi, Rao, Shoeb, Abid, Fisch, Brown, Santoro, Gupta, Garriga-Alonso et~al.}]{srivastava2022beyond}
Aarohi Srivastava, Abhinav Rastogi, Abhishek Rao, Abu Awal~Md Shoeb, Abubakar Abid, Adam Fisch, Adam~R Brown, Adam Santoro, Aditya Gupta, Adri{\`a} Garriga-Alonso, and 1 others. 2022.
\newblock Beyond the imitation game: Quantifying and extrapolating the capabilities of language models.
\newblock \emph{arXiv preprint arXiv:2206.04615}.

\bibitem[{Sun et~al.(2024{\natexlab{a}})Sun, Haider, Zhang, Yang, Qiu, Yin, Wang, Bartlett, and Zanette}]{sun2024fast}
Hanshi Sun, Momin Haider, Ruiqi Zhang, Huitao Yang, Jiahao Qiu, Ming Yin, Mengdi Wang, Peter Bartlett, and Andrea Zanette. 2024{\natexlab{a}}.
\newblock Fast best-of-n decoding via speculative rejection.
\newblock \emph{arXiv preprint arXiv:2410.20290}.

\bibitem[{Sun et~al.(2024{\natexlab{b}})Sun, Shen, and Ton}]{sun2024rethinking}
Hao Sun, Yunyi Shen, and Jean-Francois Ton. 2024{\natexlab{b}}.
\newblock Rethinking bradley-terry models in preference-based reward modeling: Foundations, theory, and alternatives.
\newblock \emph{arXiv preprint arXiv:2411.04991}.

\bibitem[{Tang et~al.(2025)Tang, Wang, Lv, Min, Zhao, Hu, Liu, and Zhang}]{tang2025unlocking}
Xinyu Tang, Xiaolei Wang, Zhihao Lv, Yingqian Min, Wayne~Xin Zhao, Binbin Hu, Ziqi Liu, and Zhiqiang Zhang. 2025.
\newblock Unlocking general long chain-of-thought reasoning capabilities of large language models via representation engineering.
\newblock \emph{arXiv preprint arXiv:2503.11314}.

\bibitem[{Team et~al.(2024)Team, Riviere, Pathak, Sessa, Hardin, Bhupatiraju, Hussenot, Mesnard, Shahriari, Ram{\'e} et~al.}]{team2024gemma}
Gemma Team, Morgane Riviere, Shreya Pathak, Pier~Giuseppe Sessa, Cassidy Hardin, Surya Bhupatiraju, L{\'e}onard Hussenot, Thomas Mesnard, Bobak Shahriari, Alexandre Ram{\'e}, and 1 others. 2024.
\newblock Gemma 2: Improving open language models at a practical size.
\newblock \emph{arXiv preprint arXiv:2408.00118}.

\bibitem[{Vaswani(2017)}]{vaswani2017attention}
A~Vaswani. 2017.
\newblock Attention is all you need.
\newblock \emph{Advances in Neural Information Processing Systems}.

\bibitem[{von R{\"u}tte et~al.(2024)von R{\"u}tte, Anagnostidis, Bachmann, and Hofmann}]{von2024language}
Dimitri von R{\"u}tte, Sotiris Anagnostidis, Gregor Bachmann, and Thomas Hofmann. 2024.
\newblock A language model's guide through latent space.
\newblock \emph{arXiv preprint arXiv:2402.14433}.

\bibitem[{Wang et~al.(2022)Wang, Wei, Schuurmans, Le, Chi, Narang, Chowdhery, and Zhou}]{wang2022self}
Xuezhi Wang, Jason Wei, Dale Schuurmans, Quoc Le, Ed~Chi, Sharan Narang, Aakanksha Chowdhery, and Denny Zhou. 2022.
\newblock Self-consistency improves chain of thought reasoning in language models.
\newblock \emph{arXiv preprint arXiv:2203.11171}.

\bibitem[{Wang and Zhou(2024)}]{wang2024chain}
Xuezhi Wang and Denny Zhou. 2024.
\newblock Chain-of-thought reasoning without prompting.
\newblock \emph{arXiv preprint arXiv:2402.10200}.

\bibitem[{Wang et~al.(2024)Wang, Wu, Lai, Zhang, and Zhou}]{wang2024seed}
Zhenglin Wang, Jialong Wu, Yilong Lai, Congzhi Zhang, and Deyu Zhou. 2024.
\newblock Seed: Accelerating reasoning tree construction via scheduled speculative decoding.
\newblock \emph{arXiv preprint arXiv:2406.18200}.

\bibitem[{Wei et~al.(2022)Wei, Wang, Schuurmans, Bosma, Xia, Chi, Le, Zhou et~al.}]{wei2022chain}
Jason Wei, Xuezhi Wang, Dale Schuurmans, Maarten Bosma, Fei Xia, Ed~Chi, Quoc~V Le, Denny Zhou, and 1 others. 2022.
\newblock Chain-of-thought prompting elicits reasoning in large language models.
\newblock \emph{Advances in neural information processing systems}, 35:24824--24837.

\bibitem[{Yao et~al.(2023)Yao, Yu, Zhao, Shafran, Griffiths, Cao, and Narasimhan}]{yao2023tree}
Shunyu Yao, Dian Yu, Jeffrey Zhao, Izhak Shafran, Tom Griffiths, Yuan Cao, and Karthik Narasimhan. 2023.
\newblock Tree of thoughts: Deliberate problem solving with large language models.
\newblock \emph{Advances in neural information processing systems}, 36:11809--11822.

\bibitem[{Yao et~al.(2024)Yao, Yu, Zhao, Shafran, Griffiths, Cao, and Narasimhan}]{yao2024tree}
Shunyu Yao, Dian Yu, Jeffrey Zhao, Izhak Shafran, Tom Griffiths, Yuan Cao, and Karthik Narasimhan. 2024.
\newblock Tree of thoughts: Deliberate problem solving with large language models.
\newblock \emph{Advances in Neural Information Processing Systems}, 36.

\bibitem[{Ye et~al.(2024)Ye, Xu, Li, and Allen-Zhu}]{ye2024physics}
Tian Ye, Zicheng Xu, Yuanzhi Li, and Zeyuan Allen-Zhu. 2024.
\newblock Physics of language models: Part 2.1, grade-school math and the hidden reasoning process.
\newblock In \emph{The Thirteenth International Conference on Learning Representations}.

\bibitem[{Zelikman et~al.(2022)Zelikman, Wu, Mu, and Goodman}]{zelikman2022star}
Eric Zelikman, Yuhuai Wu, Jesse Mu, and Noah Goodman. 2022.
\newblock Star: Bootstrapping reasoning with reasoning.
\newblock \emph{Advances in Neural Information Processing Systems}, 35:15476--15488.

\bibitem[{Zhao et~al.(2024)Zhao, Devoto, Hong, Du, Gema, Wang, Wong, and Minervini}]{zhao2024steering}
Yu~Zhao, Alessio Devoto, Giwon Hong, Xiaotang Du, Aryo~Pradipta Gema, Hongru Wang, Kam-Fai Wong, and Pasquale Minervini. 2024.
\newblock Steering knowledge selection behaviours in llms via sae-based representation engineering.
\newblock \emph{arXiv preprint arXiv:2410.15999}.

\bibitem[{Zou et~al.()Zou, Phan, Chen, Campbell, Guo, Ren, Pan, Yin, Mazeika, Dombrowski et~al.}]{zou2310representation}
Andy Zou, Long Phan, Sarah Chen, James Campbell, Phillip Guo, Richard Ren, Alexander Pan, Xuwang Yin, Mantas Mazeika, Ann-Kathrin Dombrowski, and 1 others.
\newblock Representation engineering: A top-down approach to ai transparency. corr, abs/2310.01405, 2023. doi: 10.48550.
\newblock \emph{arXiv preprint ARXIV.2310.01405}.

\end{thebibliography}

\newpage
\appendix

\newpage
\appendix
\section{Appendix}
\label{sec:appendix}

\subsection{Justification of using classifier logit as a scoring and ranking mechanism}

In this section, we provide a brief proof that a binary classifier’s logit implies ordering equivalence with preference rewards in the Bradley-Terry model\cite{sun2024rethinking,bradley1952rank}.
\label{sec:proof}

\textbf{Binary Classification Setting:}
Let $\mathcal{X}$ be the input space. For any $x \in \mathcal{X}$:
\begin{itemize}
\item $P(y = 1|x)$ denotes the probability of positive class
\item $l(x) := \text{logit}$ is the classifier logit output
\item $P(y = 1|x)$ = sigmoid($l(x)$)
\end{itemize}

\textbf{Bradley-Terry Preference Model:}
Given a question $q$, for any two responses $x_1, x_2 \in \mathcal{X}$:
\begin{itemize}
\item $P(x_1 \succ x_2|q)$ denotes preference probability that response $x_1$ is preferred to $x_2$
\item $P(x_1 \succ x_2|q) = \frac{\exp(r(x_1|q))}{\exp(r(x_1|q)) + \exp(r(x_2|q))} = \text{softmax}(r(x_1|q), r(x_2|q))$.
\item $r(\cdot|q)$ is the underlying reward function that evaluates the quality of a response. We omit the question $q$ in the following discussion for brevity.
\end{itemize}

\begin{lemma}[Classification-Preference Connection]
\label{lemma:classification-preference-connection}
For any instance $x \in \mathcal{X}$:
\begin{equation}
P(y = 1|x) = \mathbb{E}_{j\sim p(j)}[P(x \succ j)]
\end{equation}
\end{lemma}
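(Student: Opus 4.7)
The plan is to reinterpret the binary classification label via the latent pairwise comparison that underlies the Bradley-Terry model, and then apply the tower property of conditional expectation to marginalize out the comparison partner. The identity is essentially a restatement of ``classifier probability equals expected win-rate against a random reference'', so the proof is more an interpretive bookkeeping argument than a calculation.

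First I would fix a reference distribution $p(j)$ over $\mathcal{X}$ (conceptually, the distribution of comparison responses) and make explicit the construction of $y$: conditional on $(x, j)$, declare $P(y = 1 \mid x, j) = P(x \succ j)$, i.e.\ the label is drawn Bernoulli with the BT preference probability. This matches the intent that a positive label for $x$ should encode ``$x$ is preferred to a typical alternative''. Next I would invoke the tower property, writing
\begin{align*}
P(y = 1 \mid x)
  &= \int P(y = 1,\, j \mid x)\,dj \\
  &= \int P(y = 1 \mid x, j)\, p(j \mid x)\,dj.
\end{align*}
Since $j$ is drawn independently of $x$ from the reference distribution, $p(j \mid x) = p(j)$, and the inner integral collapses to $\mathbb{E}_{j \sim p(j)}[P(y = 1 \mid x, j)]$. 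Substituting the BT definition then gives
\begin{equation*}
P(y = 1 \mid x) \;=\; \mathbb{E}_{j \sim p(j)}\bigl[P(x \succ j)\bigr],
\end{equation*}
which is exactly the claim.

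The main obstacle is not the algebra, which is a one-line application of marginalization, but the interpretive step: one has to justify that the CoT versus non-CoT labeling used to train the logistic regression classifier is consistent with an implicit pairwise comparison against a reference distribution (in practice, the distribution induced by non-CoT responses). Once this correspondence between ``label $x$ as positive'' and ``draw $j \sim p(j)$ and test $x \succ j$'' is made precise, the lemma is immediate, and the downstream ordering equivalence $l(x_1) > l(x_2) \Rightarrow r(x_1) > r(x_2)$ follows by monotonicity of the sigmoid together with monotonicity of $r \mapsto \mathbb{E}_j[\sigma(r - r(j))]$.
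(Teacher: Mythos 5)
Your proposal is correct and follows essentially the same approach as the paper: both treat the binary label as the outcome of a latent pairwise comparison against a random reference $j \sim p(j)$, and both obtain the identity by marginalizing $j$ out. You spell out the tower-property step and the independence assumption $p(j\mid x)=p(j)$ more explicitly than the paper, which states the interpretation in three bullet points without writing the marginalization; this makes your version slightly more rigorous, but the key idea and structure are identical.
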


\begin{proof}
In binary classification, we can view the process as a competition where:
\begin{itemize}
\item $x$ competes against a random competitor $j$
\item $P(y = 1|x)$ represents the winning probability of $x$
\item When $j$ is randomly sampled from $p(j)$, this probability equals $\mathbb{E}_{j\sim p(j)}[P(x \succ j)]$
\end{itemize}
\end{proof}

Suppose that the classifier is trained on preference data derived from the Bradley-Terry model, where preference pairs are treated as binary classification data, we have the following theorem to connect the classifier logit and the reward function:

\begin{theorem}[Logit implies reward ordering]
Given two instances $x_1$ and $x_2$:
\begin{equation}
l(x_1) > l(x_2) \Rightarrow r(x_1) > r(x_2)
\end{equation}
\end{theorem}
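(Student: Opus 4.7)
The plan is to combine the two distinct characterizations of $P(y=1\mid x)$ that the excerpt already supplies, and show that together they force $l$ to be a strictly monotone function of $r$. On the one hand, by the definition of the logistic regression classifier, $P(y=1\mid x)=\sigma(l(x))$. On the other hand, Lemma~\ref{lemma:classification-preference-connection} states that $P(y=1\mid x)=\mathbb{E}_{j\sim p(j)}[P(x\succ j)]$, and under the Bradley--Terry model $P(x\succ j)=\sigma(r(x)-r(j))$. Equating these two expressions I obtain the identity
\begin{equation*}
\sigma(l(x)) \;=\; \mathbb{E}_{j\sim p(j)}\bigl[\sigma(r(x)-r(j))\bigr].
\end{equation*}
This single relation is the bridge between the classifier's logit and the Bradley--Terry reward, so the rest of the argument just exploits the monotonicity of $\sigma$.

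Next I would define $\phi(t):=\mathbb{E}_{j\sim p(j)}[\sigma(t-r(j))]$ and argue that $\phi$ is \emph{strictly} increasing in $t$. This follows from two observations applied pointwise in $j$: first, $\sigma$ is strictly increasing, so for $t_1>t_2$ we have $\sigma(t_1-r(j))>\sigma(t_2-r(j))$ for every $j$; second, taking expectation with respect to $p(j)$ preserves this strict inequality provided $p(j)$ is not degenerate in a way that makes the integrand identically zero on its support (a mild nondegeneracy assumption that is implicit in the sampling setup). Combined with the identity above, this yields
\begin{equation*}
l(x) \;=\; \sigma^{-1}\bigl(\phi(r(x))\bigr),
\end{equation*}
which exhibits $l$ as the composition of two strictly increasing maps applied to $r(x)$, hence strictly increasing in $r(x)$.

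From here the conclusion is immediate by contrapositive: if $r(x_1)\le r(x_2)$, then the monotonicity of $\phi$ and $\sigma^{-1}$ gives $l(x_1)\le l(x_2)$, contradicting the hypothesis $l(x_1)>l(x_2)$. Therefore $r(x_1)>r(x_2)$, as required. In fact the same argument delivers the stronger equivalence $l(x_1)>l(x_2)\iff r(x_1)>r(x_2)$, which is what justifies using the logit as a ranking score in the tree search.

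The main obstacle I anticipate is purely a matter of clean bookkeeping around the competitor distribution $p(j)$: the argument implicitly requires that $p(j)$ does not place all mass on a single point with $r(j)=\pm\infty$, and that the classifier has been trained to the Bradley--Terry optimum so that the equality $\sigma(l(x))=\mathbb{E}_j[\sigma(r(x)-r(j))]$ holds exactly rather than only approximately. Both are standard assumptions in the Bradley--Terry setting and should be stated explicitly at the outset; once they are in place, the proof itself reduces to the two monotonicity steps sketched above.
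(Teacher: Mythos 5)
Your proof is correct and rests on the same two ingredients as the paper's: the Classification--Preference Connection lemma and the Bradley--Terry form of the preference probability. The packaging is slightly different and arguably cleaner. The paper argues pointwise: from $l(x_1)>l(x_2)$ it deduces $P(y=1\mid x_1)>P(y=1\mid x_2)$, writes the difference as a nonnegative-weighted sum of terms $f(r(x_1),j)-f(r(x_2),j)$ with $f$ strictly increasing in its first argument, and concludes that at least one term must be positive, forcing $r(x_1)>r(x_2)$. You instead observe that $\sigma(l(x))=\phi(r(x))$ with $\phi(t)=\mathbb{E}_{j}[\sigma(t-r(j))]$ strictly increasing, so $l=\sigma^{-1}\circ\phi\circ r$ is a strictly increasing function of the reward. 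Your route makes the order-isomorphism explicit and therefore delivers the two-sided equivalence $l(x_1)>l(x_2)\iff r(x_1)>r(x_2)$ for free, which is what the application actually needs, whereas the paper proves only the one direction stated. You are also right to flag the two implicit hypotheses, namely nondegeneracy of the competitor distribution $p(j)$ and that the classifier has reached the Bradley--Terry optimum so that $\sigma(l(x))=\mathbb{E}_j[P(x\succ j)]$ holds exactly; the paper relies on the same assumptions but states them less explicitly.
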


\begin{proof}
We prove $l(x_1) > l(x_2) \Rightarrow r(x_1) > r(x_2)$ using preference probabilities' strict monotonicity.

Given $l(x_1) > l(x_2)$, the sigmoid function's strict monotonicity means $P(y = 1|x_1) > P(y = 1|x_2)$. 
Using the Classification-Preference Connection\ref{lemma:classification-preference-connection} and Bradley-Terry model, we show:

\begin{align*}
\sum_{j} p(j) \cdot &\left[\frac{\exp(r(x_1))}{\exp(r(x_1)) + \exp(r(j))}\right. \\
&\left.- \frac{\exp(r(x_2))}{\exp(r(x_2)) + \exp(r(j))}\right] > 0
\end{align*}

Since $p(j) > 0$ and $f(r, j) = \frac{\exp(r)}{\exp(r) + \exp(r(j))}$ is strictly increasing in $r$, at least one $j$ has $f(r(x_1), j) > f(r(x_2), j)$, implying $r(x_1) > r(x_2)$.

Therefore, the logit ordering implies the reward ordering.
\end{proof}

\begin{theorem}[Logit is lower bounded by reward]
There exists a constant $C$ dose not depend on $x$, such that:
\begin{equation}
l(x) \geq r(x) - C
\end{equation}
\end{theorem}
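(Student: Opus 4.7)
My plan is to reduce the desired inequality on the logit to a pointwise comparison of sigmoid values, and then extract the constant $C$ via a single application of Jensen's inequality. Since $\sigma$ is strictly increasing, the bound $l(x) \geq r(x) - C$ is equivalent to $\sigma(l(x)) \geq \sigma(r(x) - C)$, so it suffices to find a constant $C$ such that this sigmoid inequality holds for all $x$. By Lemma~\ref{lemma:classification-preference-connection} combined with the Bradley-Terry expression for $P(x \succ j)$, we already have the clean identity
\begin{equation*}
\sigma(l(x)) \;=\; \mathbb{E}_{j}\!\left[\frac{e^{r(x)}}{e^{r(x)} + e^{r(j)}}\right] \;=\; e^{r(x)}\,\mathbb{E}_{j}\!\left[\frac{1}{e^{r(x)} + e^{r(j)}}\right].
\end{equation*}

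The key step is to apply Jensen's inequality to the map $u \mapsto 1/(e^{r(x)} + u)$, which is convex on $(0,\infty)$ (its second derivative is $2/(e^{r(x)}+u)^3 > 0$). Treating $u = e^{r(j)}$ as the random variable and setting $Z := \mathbb{E}_{j}[e^{r(j)}]$, Jensen gives
\begin{equation*}
\mathbb{E}_{j}\!\left[\frac{1}{e^{r(x)} + e^{r(j)}}\right] \;\geq\; \frac{1}{e^{r(x)} + Z}.
\end{equation*}
Multiplying through by $e^{r(x)}$ yields $\sigma(l(x)) \geq e^{r(x)}/(e^{r(x)} + Z)$, and a direct algebraic manipulation rewrites the right-hand side as $\sigma(r(x) - \log Z)$. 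Monotonicity of $\sigma$ then delivers $l(x) \geq r(x) - \log Z$, so the theorem holds with
\begin{equation*}
C \;=\; \log Z \;=\; \log \mathbb{E}_{j}[e^{r(j)}],
\end{equation*}
which is manifestly independent of $x$.

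The only genuine obstacle is ensuring that the constant $C$ is finite, i.e., that the moment generating function of the reward under $p(j)$ satisfies $\mathbb{E}_{j}[e^{r(j)}] < \infty$. This is a mild regularity condition on the competitor distribution (automatic whenever rewards are bounded, as they are in practice) and should be stated as a standing assumption preceding the proof. Everything else is a short chain of substitutions and a single convexity argument, so I would present the proof in the order: (i) recall the identity from Lemma~\ref{lemma:classification-preference-connection}; (ii) factor out $e^{r(x)}$; (iii) apply Jensen to the convex reciprocal; (iv) recognize the right-hand side as $\sigma(r(x) - \log Z)$; (v) invert by monotonicity to conclude.
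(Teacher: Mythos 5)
Your proof is correct and follows essentially the same route as the paper: both apply Jensen's inequality to the convex map $t \mapsto e^{r(x)}/(e^{r(x)}+t)$ (you simply factor out $e^{r(x)}$ first, which is equivalent) and then undo the sigmoid/logit to obtain $C = \log \mathbb{E}_j[e^{r(j)}]$. Your explicit remark that $\mathbb{E}_j[e^{r(j)}]<\infty$ must hold is a sound addition that the paper leaves implicit.
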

    
    \begin{proof}
    Under the Bradley-Terry model:
    \begin{equation}
    P(y = 1|x) = \mathbb{E}_j\left[\frac{\exp(r(x))}{\exp(r(x)) + \exp(r(j))}\right]
    \end{equation}
    
    By Jensen's inequality, since $f(t) = \frac{a}{a + t}$ is convex in $t$ for $a > 0$:
    \begin{equation}
    P(y = 1|x) \geq \frac{\exp(r(x))}{\exp(r(x)) + \mathbb{E}[\exp(r(j))]}
    \end{equation}
    
    Taking logit transformation:
    \begin{align*}
    l(x) &= \text{logit} P(y = 1|x) = \log\frac{P(y = 1|x)}{1 - P(y = 1|x)} \\
         & \geq r(x) - \underbrace{\log(\mathbb{E}[\exp(r(j))])}_{C}
    \end{align*}
    \end{proof}

The above theorem shows that we can use the classifier logit as a scoring and ranking mechanism for the responses during the tree search. 
As we define CoT responses are preferred to non-CoT responses and are treated as positive samples in classifier training, the above theorem implies that the logit of CoT responses are higher than that of non-CoT responses.

It's important to note that using classifiers as complete substitutes for reward models in downstream optimization scenarios requires additional theoretical constraints and considerations. 
We refer readers to the comprehensive analysis presented in \cite{sun2024rethinking}.

\section{Baselines Reproducing Details}
\label{app:baselines reproducing details}
For the activation steering method, we follow the implementation described in \cite{hojer2025improving}, calculating a control vector $v$ using the difference-in-mean approach.
Specifically, we feed all positive and negative responses to the LLM and compute the mean hidden representations for both positive and negative responses. The control vector $v$ is then derived as the difference between these two means.
During inference, we input a question to the LLM and apply the control vector $v$ to steer the hidden representations in the forward pass.
The representation types and layers selected for steering match those used in our ThoughtProbe method for each LLM. We set the steering strength parameter to 1 across all experiments.

For the zero-shot ToT, we follow the implementation described in the appendix B.1 of \cite{yao2023tree}. 
The task format prompt is \textit{"the answer is n" where n is a number}.
The standard IO prompt is \textit{'Answer the following question with {format}: {input}'}.
The thought generation prompt is \textit{Answer the following question: {input}
Make a strategy then write. Your output should be of the following format:
Strategy:
Your strategy about how to answer the question.
Answer:
Your answer to the question. It should end with {format}}.
The voting/evaluation prompt is \textit{Given an instruction and several choices,
decide which choice is most promising.
Analyze each choice in detail, then conclude in the last line
"The best choice is {s}", where s the integer id of the choice.}.

\section{Computation Complexity Notion and Discussion}
\label{app:computation complexity}
We show computational complexity notions of all methods.
\begin{table}[h]
\centering
\begin{tabular}{ll}
\hline
\textbf{Method} & \textbf{Computational Complexity} \\
\hline
Greedy & $O(T)$ - \\
& single sequence processing \\ \
ZS-CoT & $O(T+P)$ - \\
& $P$ is prompt overhead \\
SC & $O(n \times T)$ - \\
& $n$ parallel sequences \\
CoT-Dec & $O(n \times T)$ + confidence scoring \\
Act-S & $O(T + L \times H)$ -  \\
& $L$=layers number, \\
& $H$=Steering overhead \\
ZS-ToT & $O(b^d \times (C_T + C_E))$ - \\
& $d$=tree depth, \\
& $b$=thoughts number, \\ 
& $C_T$=thought prompt cost, \\
& $C_E$=evaluation prompt cost \\
ThoughtProbe & $ O(k \times n^m \times (C + 1))$ - \\
& $n$=beam width, $m$=depth, \\
& $C$=classifier cost, \\
& $k$=candidates per step \\
\hline
\end{tabular}
\caption{Computational complexity comparison of different methods.}
\label{tab:computational-complexity}
\end{table}

The computational complexity analysis in Table \ref{tab:computational-complexity} reveals several key insights about the efficiency of different reasoning methods. Traditional approaches like Greedy decoding and ZS-CoT maintain linear complexity with respect to sequence length, making them computationally efficient but limited in reasoning capability. 
Self-consistency methods (SC) scale linearly with the number of parallel sequences, offering a reasonable balance between computational cost and performance. 

Activation steering (Act-S) introduces additional overhead proportional to the number of layers being steered, but maintains the same asymptotic complexity as standard decoding. 
In contrast, tree-based methods like ZS-ToT face exponential complexity growth ($O(b^d)$) as tree depth increases, severely limiting their practical application to complex reasoning tasks despite their strong performance.

Our ThoughtProbe method achieves a favorable complexity profile of $O(k \times n^m \times (C + 1))$, where $n$ is the beam width, $m$ is the exploration depth, $k$ is the number of candidates per step, and $C$ represents the classifier cost. While this still involves exponential growth with depth, our approach is more efficient in practice than traditional tree-based methods like ZS-ToT because: (1) we typically use smaller beam widths and depths, (2) our classifier-guided pruning effectively reduces the search space, and (3) the linear classifier overhead is minimal compared to token generation costs. This analysis demonstrates that ThoughtProbe offers an effective balance between computational efficiency and reasoning capability, making it practical for deployment in real-world applications while maintaining comparable or superior performance to more computationally intensive methods.

\newpage
\section{More Probing Experiments Details}
\label{sec:more probing results}

\subsection{Classifier Training Data Construction Details}
\label{app: classifier training data}
We first employ Top-K-Start Greedy Decoding, which explores alternative top-k tokens at the first decoding step, followed by greedy decoding for subsequent steps, to sample 10 distinct responses.
To ensure response quality, we first filter the sampled responses to remove potential repetitions of input questions as LLMs occasionally exhibit a pattern where they restate the original question after providing their answer.
We apply a post-processing step to extract only the solution-relevant content, ensuring each response contains purely reasoning and answer components without redundant question restatements.
After filtering, we prompt GPT4o as a judge to label the filtered responses as either CoT or non-CoT responses.
The prompt is as follows:

\textit{You are an expert at analyzing reasoning patterns in AI responses. Given a question-response pair, your task is to determine whether the response follows a Chain of Thought (CoT) reasoning pattern or not.}

\textit{Chain of Thought (CoT) responses show explicit step-by-step reasoning before arriving at the final answer. They break down the problem, work through intermediate steps, and show the logical progression that leads to the conclusion.}

\textit{Non-CoT responses provide direct answers without showing the reasoning process or intermediate steps.}

\textit{For the question-response pair provided below, analyze whether the response uses Chain of Thought reasoning by checking if it:
1. Shows explicit reasoning steps
2. Breaks down the problem into parts
3. Works through intermediate calculations or logical steps
4. Explains the thinking process before giving the final answer}

\textit{Reply with:
- "COT" if the response demonstrates Chain of Thought reasoning with clear intermediate steps
- "NON-COT" if the response gives a direct answer without showing the reasoning process
Question: [Question will be inserted here]
Response: [Response will be inserted here]}
Our CoT/non-CoT dataset, constructed using Mistral-7b sampling and GPT-4o annotation, directly transfers to train effective classifiers for Phi-1.5 and Gemma2-2b without reconstruction. 
The fundamental CoT vs. non-CoT distinction captures abstract reasoning patterns that generalize across LLM architectures. The rationale behind dataset reusability is that we only need paired data capturing fundamental CoT vs non-CoT reasoning patterns, which transfer across LLMs due to shared training objectives and representational structures,


\subsection{Classifier Training Settings}
For our experiments, we collected a dataset comprising 1245 positive (CoT) and 1868 negative (non-CoT) samples, with each sample representing a question-response pair. 
Analysis of response lengths revealed that CoT responses averaged 131.7 tokens, while non-CoT responses averaged only 15.8 tokens.

We extracted token-wise hidden representations from all layers of the LLM network and trained a separate classifier for each layer. To address the imbalance in response lengths between positive and negative samples, we implemented a strategic sampling approach: extracting hidden representations every five tokens for CoT responses and every token for non-CoT responses, thereby creating a more balanced training dataset.

We train the classifier using the Logistic Regression and Support Vector Machine (SVM) classifiers. 
The epoch number is 100, the learning rate is 0.001, using stochastic gradient descent (SGD) as the optimizer.

\section{More Probing Experiments Results}
\subsection{Classifier Classification Performance}
\label{app:classifier analysis}
Building upon our main findings, we conduct an extensive evaluation using both Logistic Regression (LR) and Support Vector Machine (SVM) classifiers, assessed through Accuracy, F1-score, and AUC-ROC metrics. 
As shown in Figures \ref{app:LR} and \ref{app:SVM}, these complementary metrics reinforce and extend our key observations:

(1) Representation type analysis:
The distinct patterns observed across different LLMs are consistently reflected across all metrics. In Mistral-7b, hidden states maintain their superior performance across both classifiers and all evaluation metrics, with MLP and attention activations showing comparable but slightly lower performance. For Phi-1.5, the notable fluctuation in hidden states and the stable superiority of attention outputs are robustly captured by all metrics. In Gemma2-2b, attention activations consistently demonstrate the strongest discriminative power across all evaluation criteria, while hidden states and MLP outputs show substantial variations. This consistent pattern across different evaluation frameworks strengthens our observation about model-specific architectural strategies for encoding thoughtful reasoning.

(2) Layer-wise analysis:
The layer-specific trends identified in our main analysis persist across different classification approaches and metrics. Mistral-7b's progressive improvement in deeper layers is consistently observed in both LR and SVM results, regardless of the evaluation metric used. 
The more distributed patterns in Phi-1.5 and Gemma2-2b, characterized by fluctuations without clear directional trends, are similarly preserved across all evaluation frameworks. 
These consistent findings across multiple metrics provide strong validation for our conclusions about how different LLMs architecturally encode thoughtfulness features.

\subsection{Classifier logit value analysis}
\label{app:score analysis}
We conduct a detailed analysis of classifier logit value distributions across multiple language models (Mistral-7b, Gemma2-2b, Phi-1.5). 
Using both Logistic Regression and SVM classifiers, we compare the distributional patterns between the most discriminative layers (top-3 F1 scores) and least discriminative layers (bottom-3 F1 scores).

Figures \ref{app:score_gemma_short}, \ref{app:score_gemma_long} present comprehensive comparisons of classifier logit values for Gemma2-2B. Figures \ref{app:score_mist_short}, \ref{app:score_mist_long} present comprehensive comparisons of classifier logit values for Mistral-7B. Figures \ref{app:score_phi_short}, \ref{app:score_phi_long} present comprehensive comparisons of classifier logit values for Phi-1.5. 

Specifically, Figure \ref{app:score_gemma_short} compares thoughtful correct responses against intuitive responses, while Figure \ref{app:score_gemma_long} contrasts thoughtful correct responses with thoughtful incorrect responses. 
These comparisons are conducted within both top-3 and bottom-3 performing layers (ranked by F1-scores), spanning across different classifier architectures (Logistic Regression and SVM) and various representation types (attention activations, MLP activations, and hidden states), providing a thorough validation of the scoring
and ranking ability of the classifier's logit.

Notably, in attention activations, which achieve the best classification performance among all representation types, thoughtful correct responses consistently receive higher logit values than both intuitive responses and thoughtful but incorrect responses, demonstrating the robust discriminative ability of our approach. However, this clear ranking pattern is occasionally violated in MLP activations and hidden states, where thoughtful correct responses sometimes receive lower logit values than the other response types. Moreover, this ranking trend is more pronounced in top-3 performing layers compared to bottom-3 layers, suggesting that layers with stronger discriminative power better preserve the desired response quality ordering.

Similar patterns are observed in Mistral-7B and Phi-1.5 models, indicating that our trained classifiers demonstrate strong scoring and ranking capabilities across different model architectures, making them reliable guides for thought space exploration.

\begin{figure*}[ht]
\vskip 0.2in
\begin{center}
\centerline{\includegraphics[scale = 0.45]{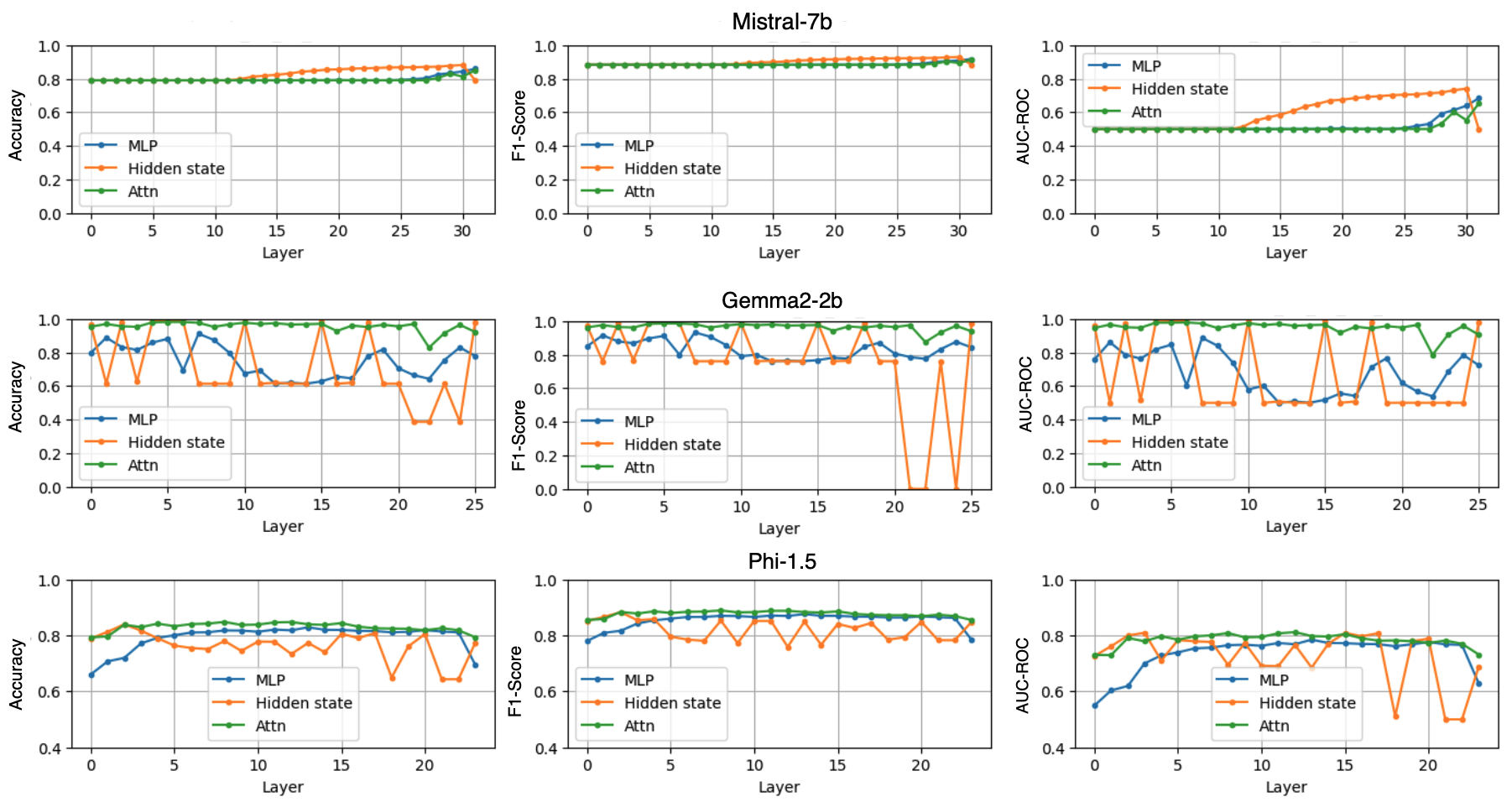}}
\vskip -0.1in
\caption{LR classification performance across LLMs and representation types}
\label{app:LR}
\end{center}
\vskip -0.2in
\end{figure*}

\begin{figure*}[ht]
\vskip 0.2in
\begin{center}
\centerline{\includegraphics[scale = 0.45]{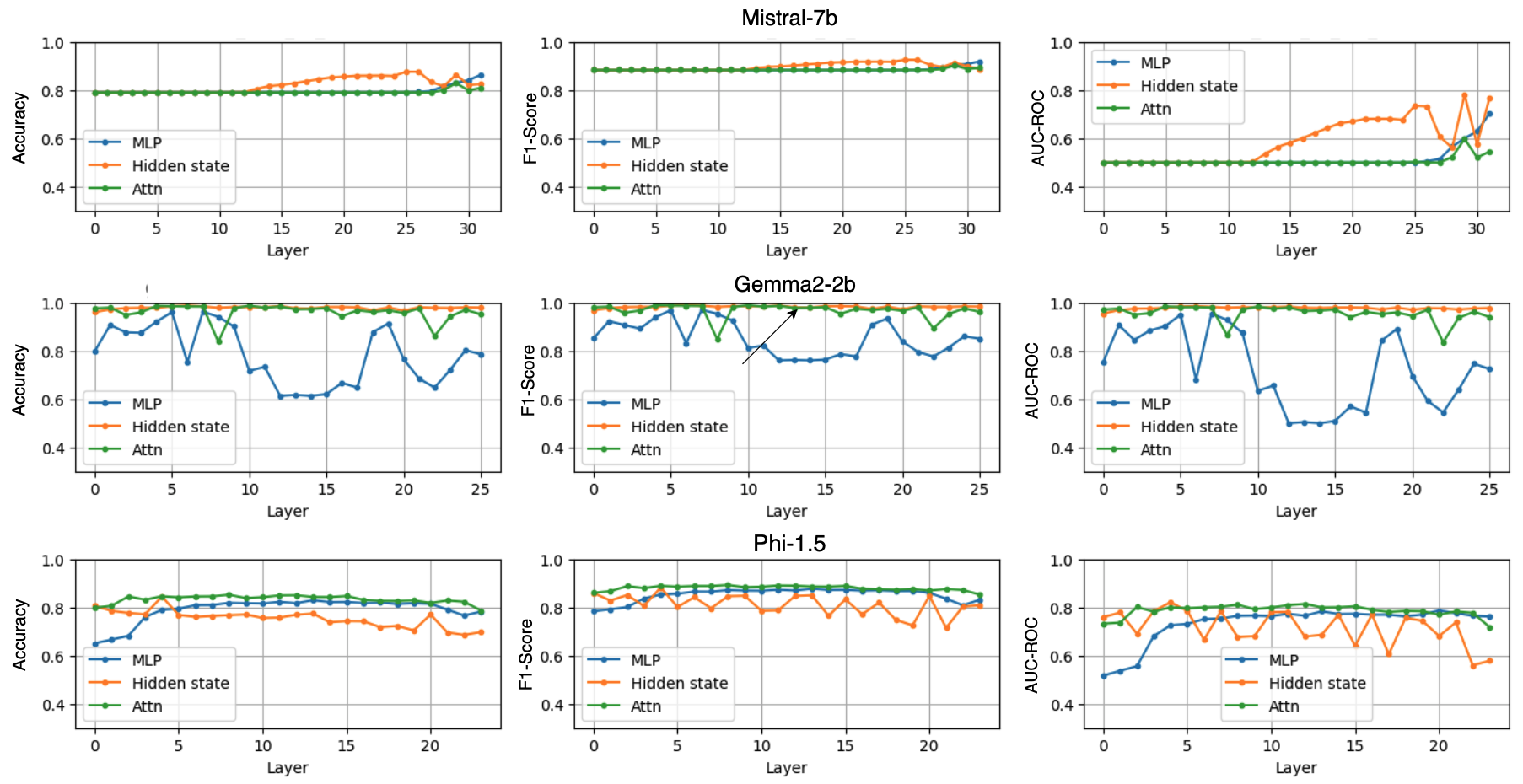}}
\vskip -0.1in
\caption{SVM classification performance across LLMs and representation types}
\label{app:SVM}
\end{center}
\vskip -0.2in
\end{figure*}

\begin{figure*}[ht]
\begin{center}
\centerline{\includegraphics[scale = 0.68]{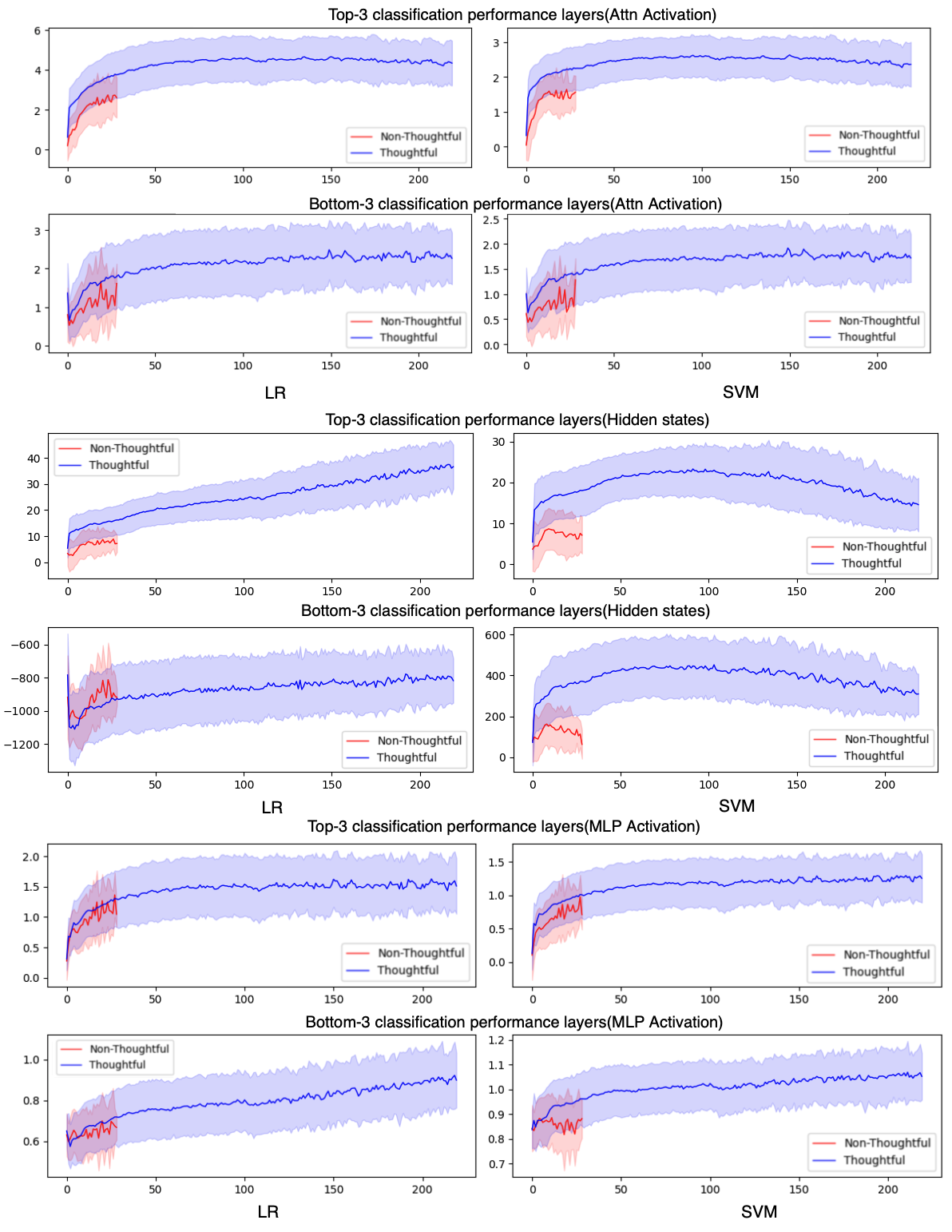}}
\vskip -0.1in
\caption{Mean logit values and variance regions in Gemma2-2b, comparing lengthy thoughtful correct responses with concise intuitive incorrect ones.}
\label{app:score_gemma_short}
\end{center}
\vskip -0.2in
\end{figure*}

\begin{figure*}[ht]
\begin{center}
\centerline{\includegraphics[scale = 0.67]{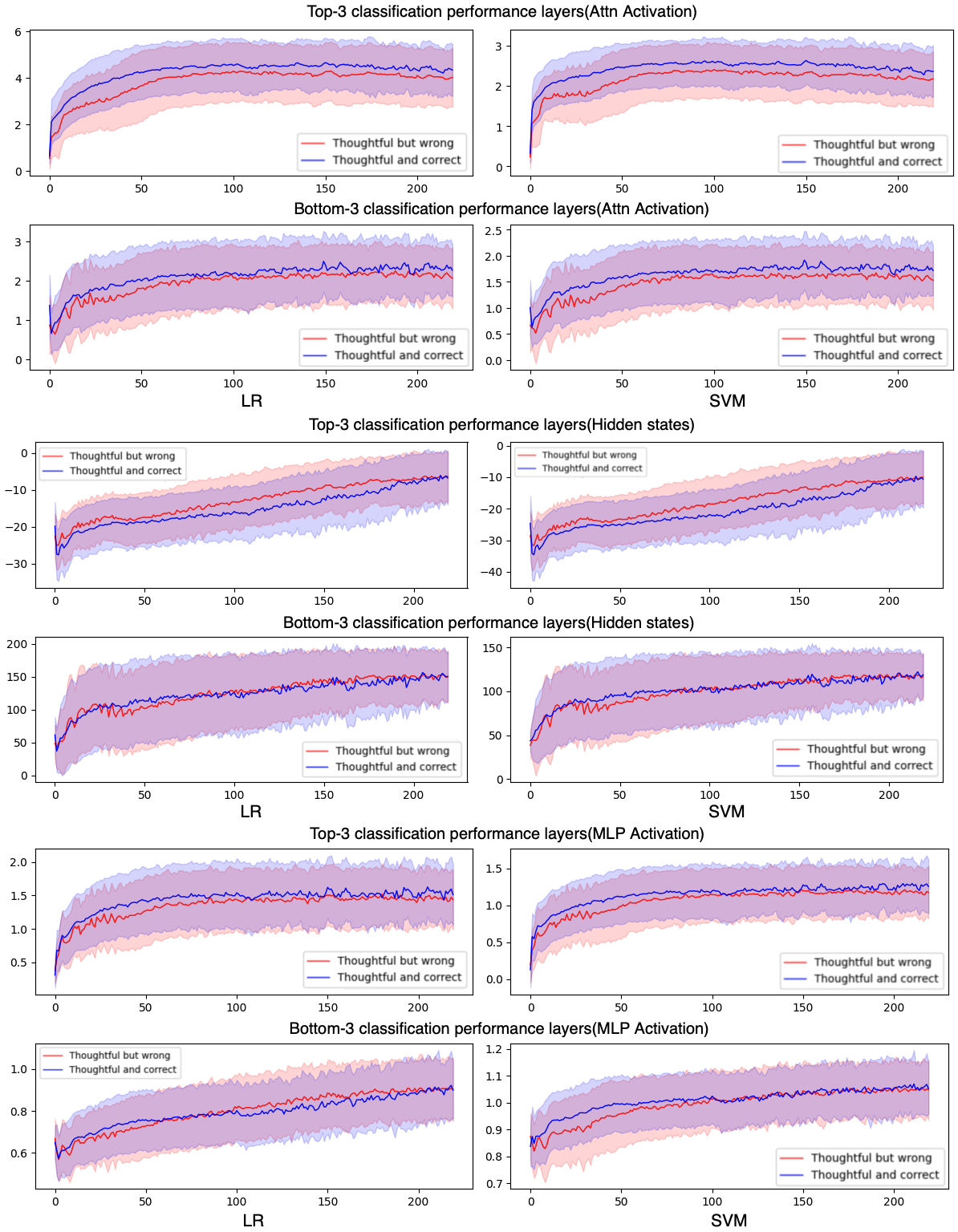}}
\vskip -0.1in
\caption{Mean logit values and variance regions in Gemma2-2b, comparing lengthy thoughtful correct responses with lengthy incorrect ones.}
\label{app:score_gemma_long}
\end{center}
\vskip -0.2in
\end{figure*}

\begin{figure*}[ht]
\begin{center}
\centerline{\includegraphics[scale = 0.70]{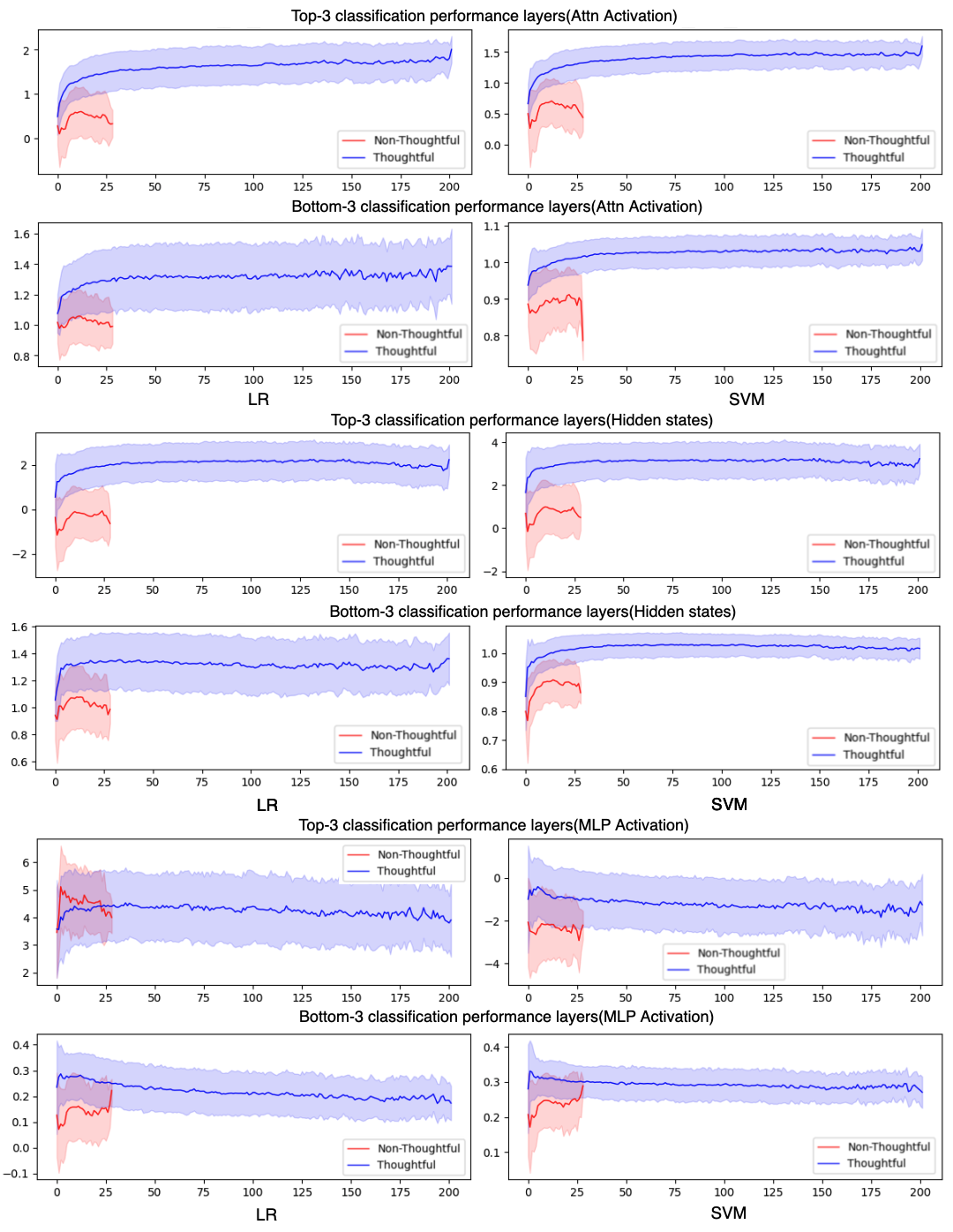}}
\vskip -0.1in
\caption{Mean logit values and variance regions in Mistral-7b, comparing lengthy thoughtful correct responses with concise intuitive incorrect ones.}
\label{app:score_mist_short}
\end{center}
\vskip -0.2in
\end{figure*}

\begin{figure*}[ht]
\begin{center}
\centerline{\includegraphics[scale = 0.67]{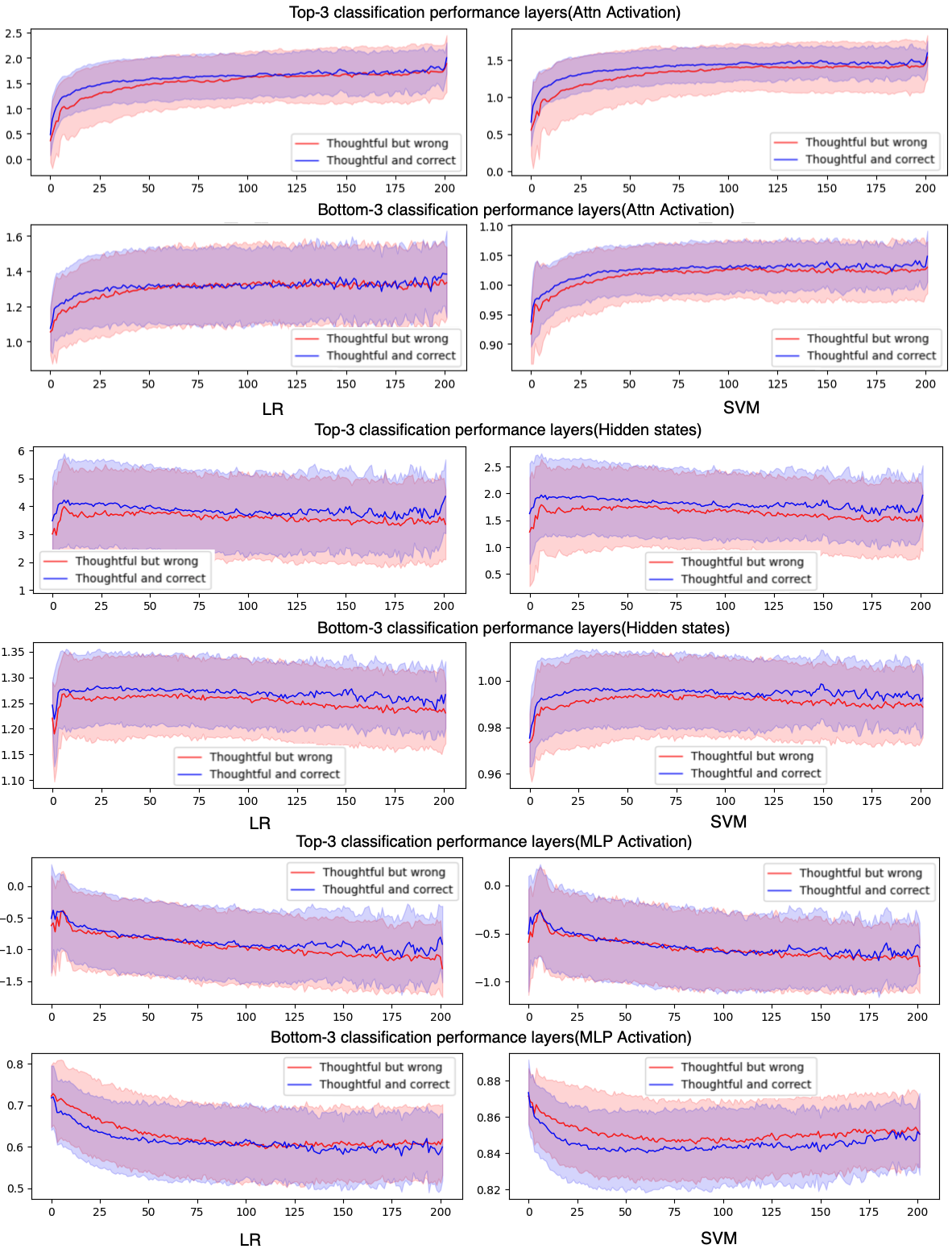}}
\vskip -0.1in
\caption{Mean logit values and variance regions in Mistral-7b, comparing lengthy thoughtful correct responses with lengthy incorrect ones.}
\label{app:score_mist_long}
\end{center}
\vskip -0.2in
\end{figure*}

\begin{figure*}[ht]
\begin{center}
\centerline{\includegraphics[scale = 0.6]{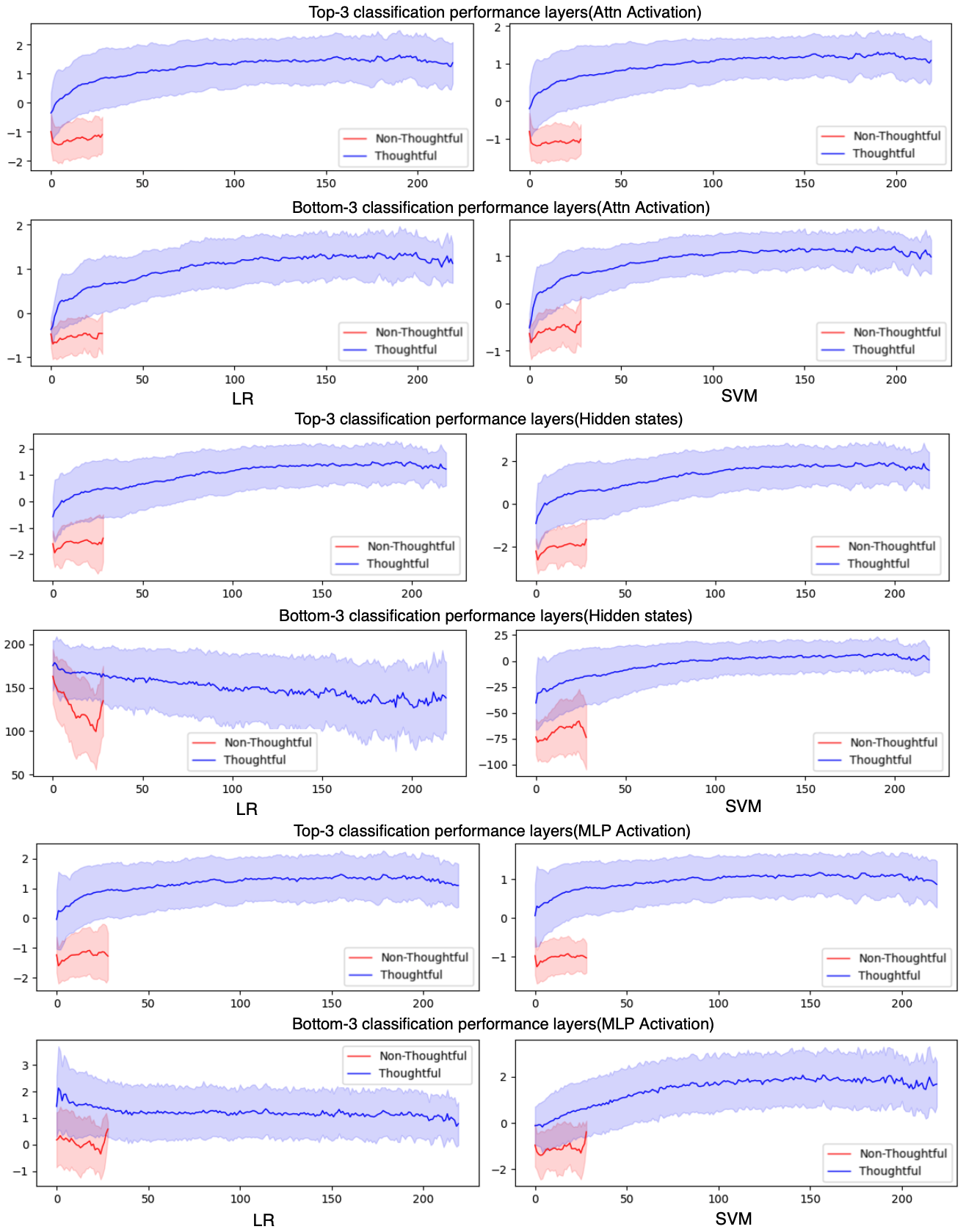}}
\vskip -0.1in
\caption{Mean logit values and variance regions in Phi-1.5, comparing lengthy thoughtful correct responses with concise intuitive incorrect ones.}
\label{app:score_phi_short}
\end{center}
\vskip -0.2in
\end{figure*}

\begin{figure*}[ht]
\begin{center}
\centerline{\includegraphics[scale = 0.67]{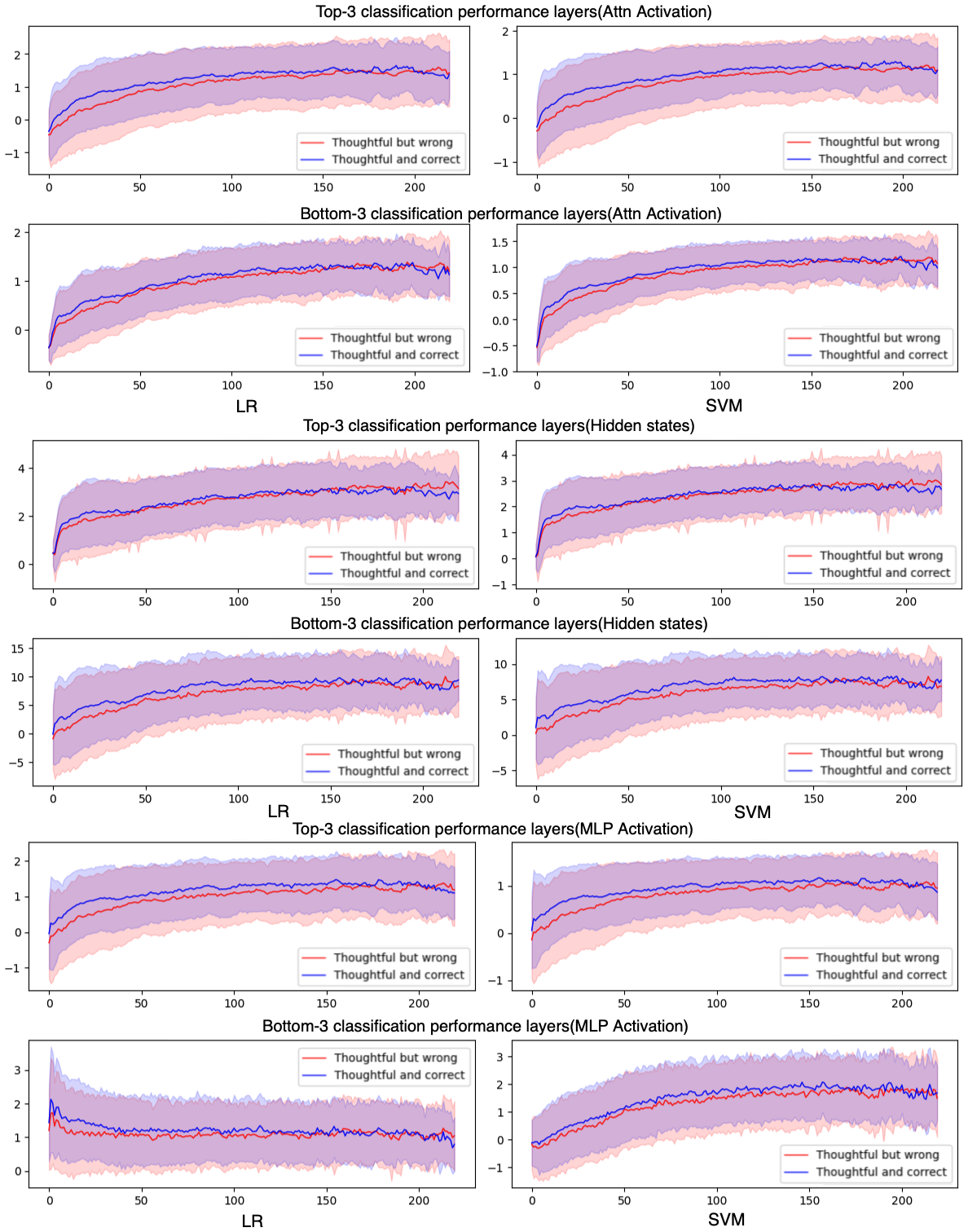}}
\vskip -0.1in
\caption{Mean logit values and variance regions in Phi-1.5, comparing lengthy thoughtful correct responses with lengthy incorrect ones.}
\label{app:score_phi_long}
\end{center}
\vskip -0.2in
\end{figure*}

\end{document}